\newtheorem{theorem}{Theorem}
\newtheorem{lemma}{Lemma}
\newtheorem{assumption}{Assumption}
\newtheorem{proposition}{Proposition}
\def\eqref#1{equation~\ref{#1}}
\def\1{\bm{1}}
\def\vs{{\bm{s}}}
\def\vx{{\bm{x}}}
\def\vz{{\bm{z}}}
\DeclareMathAlphabet{\mathsfit}{\encodingdefault}{\sfdefault}{m}{sl}
\SetMathAlphabet{\mathsfit}{bold}{\encodingdefault}{\sfdefault}{bx}{n}
\title{Joint Perception and Control as Inference with an Object-based Implementation}
\newcommand*\samethanks[1][\value{footnote}]{\footnotemark[#1]}
\author{Minne Li\thanks{Equal contributions.}, Zheng Tian\samethanks, Pranav Nashikkar, Ian Davies, Ying Wen \& Jun Wang \\
%\thanks{ Use footnote for providing further information
%about author (webpage, alternative address)---\emph{not} for acknowledging
%funding agencies.  Funding acknowledgements go at the end of the paper.} \\
Computer Science Department\\
University College London\\
London, WC1E 6BT, United Kingdom \\
%\texttt{\{minne.li,zheng.tian.11,pranav.nashikkar.15,ian.davies.12,}\\
%\texttt{ying.wen.15,jun.l.wang\}@ucl.ac.uk} \\
%\And
%Ji Q. Ren \& Yevgeny LeNet \\
%Department of Computational Neuroscience \\
%University of the Witwatersrand \\
%Joburg, South Africa \\
%\texttt{\{robot,net\}@wits.ac.za} \\
%\AND
%Coauthor \\
%Affiliation \\
%Address \\
%\texttt{email}
}
\begin{document}

\maketitle

\begin{abstract}
Existing model-based reinforcement learning methods often study perception modeling and decision making separately.
We introduce joint Perception and Control as Inference (PCI),
a general framework to combine perception and control for partially observable environments through Bayesian inference.
Based on the fact that object-level inductive biases are critical in human perceptual learning and reasoning,
we propose Object-based Perception Control (OPC),
an instantiation of PCI which manages to facilitate control using automatic discovered object-based representations.
We develop an unsupervised end-to-end solution and analyze the convergence of the perception model update.
Experiments in a high-dimensional pixel environment demonstrate the learning effectiveness of our object-based perception control approach. 
Specifically, we show that OPC achieves good perceptual grouping quality and outperforms several strong baselines in accumulated rewards.
\end{abstract}

\section{Introduction}
Human-like computing, which aims at endowing machines with human-like perceptual, reasoning and learning abilities, has recently drawn considerable attention~\citep{lake2014towards,Lake1332,BakerRational17}.
In order to operate within a dynamic environment while preserving homeostasis~\citep{kauffman1993origins},
humans maintain an internal model to learn new concepts efficiently from a few examples~\citep{friston2005theory}.
The idea has since inspired many \emph{model-based} reinforcement learning (MBRL) approaches to learn a concise perception model of the world~\citep{kaelbling1998planning}.
MBRL agents then use the perceptual model to choose effective actions. 
However, most existing MBRL methods separate perception modeling and decision making,
leaving the potential connection between the objectives of these processes unexplored.
A notable work by~\cite{hafner2020action} provides a unified framework for perception and control.
Built upon a general principle this framework covers a wide range of objectives in the fields of representation learning and reinforcement learning.
However, they omit the discussion on combining perception and control for partially observable Markov decision processes (POMDPs),
which formalizes many real-world decision-making problems.
In this paper,
therefore,
we focus on the joint perception and control as inference for POMDPs and provide a specialized joint objective as well as a practical implementation.

Many prior MBRL methods fail to facilitate common-sense physical reasoning~\citep{battaglia2013simulation},
which is typically achieved by utilizing object-level \emph{inductive biases},
e.g., the prior over observed objects' properties, such as the type, amount, and locations.
In contrast, humans can obtain these inductive biases through interacting with the environment and receiving feedback throughout their lifetimes~\citep{spelke1992origins},
leading to a unified hierarchical and behavioral-correlated perception model to perceive events and objects from the environment~\citep{lee2003hierarchical}.
Before taking actions, a human agent can use this model to decompose a complex visual scene into distinct parts, understand relations between them, reason about their dynamics and predict the consequences of its actions~\citep{battaglia2013simulation}.
Therefore, equipping MBRL with object-level inductive biases is essential to create agents capable of emulating human perceptual learning and reasoning and thus complex decision making~\citep{Lake1332}.
We propose to train an agent in a similar way to gain inductive biases by learning the structured properties of the environment.
This can enable the agent to plan like a human using its ability to think ahead, see what would happen for a range of possible choices, and make rapid decisions while learning a policy with the help of the inductive bias~\citep{lake2017building}.
Moreover, in order to mimic a human's spontaneous acquisition of inductive biases throughout its life, we propose to build a model able to acquire new knowledge online, rather than a one which merely generates static information from offline training~\citep{dehaene2017consciousness}.

\begin{wrapfigure}{r}{0.65\textwidth}
\vspace{-4mm}
\centerline{\includegraphics[width=0.6\columnwidth]{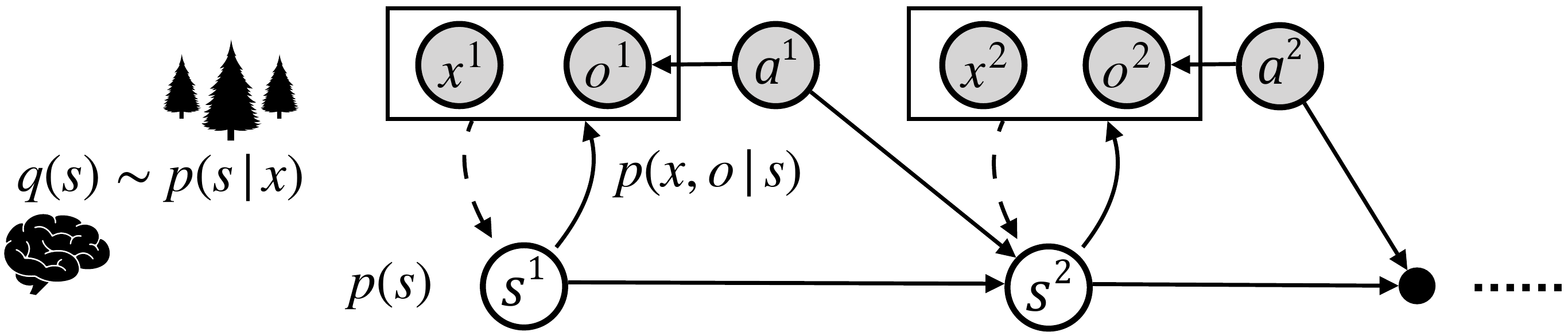}}
\vskip -0.05in
\caption{
The graphical model of joint Perception and Control as Inference (PCI),
where $\vs$ and $o$ represent the latent state and the binary optimality binary variable, respectively.
The hierarchical perception model includes a bottom-up recognition model $q(\vs)$ and a top-down generative model $p(\vx, o, \vs)$ (decomposed into the likelihood $p(\vx, o|\vs)$ and the prior belief $p(\vs)$).
Control is performed by taking an action $a$ to affect the environment state.
}
\label{fig:free_energy}
\vskip -0.15in
\end{wrapfigure}
In this paper, we introduce joint Perception and Control as Inference (PCI) as shown in Fig.~(\ref{fig:free_energy}),
a unified framework for decision making and perception modeling to facilitate understanding of the environment while providing a joint objective for both the perception and the action choice.
As we argue that inductive bias gained in object-based perception is beneficial for control tasks, we then propose Object-based Perception Control (OPC),
an instantiation of PCI which facilitates control with the help of automatically discovered representations of objects from raw pixels.
We consider a setting inspired by real-world scenarios; we consider a partially observable environment in which agents' observations consist of a visual scene with compositional structure.
The perception optimization of OPC is typically achieved by inference in a spatial mixture model through generalized expectation maximization~\citep{dempster1977maximum},
while the policy optimization is derived from conventional temporal-difference (TD) learning~\citep{Sutton1988}.
Proof of convergence for the perception model update is provided in Appendix~\ref{appendix:convergence}.
% Sect.~\ref{subsec:convergence}.
We test OPC on the Pixel Waterworld environment. Our results show that OPC achieves good quality and consistent perceptual grouping and outperforms several strong baselines in terms of accumulated rewards.

\section{Related Work}
\paragraph{Connecting Perception and Control}
Formulating RL as Bayesian inference over inputs and actions has been explored by recent works~\citep{todorov2008duality,kappen2009klcontrol,rawlik2010controlasinference,ortega2011boundedrational,levine2018maxent,lee2019smm,lee2019slac,xin2020explentropy}.
The generalized free energy principle~\citep{parr2019gfe} studies a unified objective by heuristically defining entropy terms.
A unified framework for perception and control from a general principle is proposed by~\cite{hafner2020action}.
Their framework provides a common foundation from which a wide range of objectives can be derived such as representation learning, information gain, empowerment, and skill discovery. 
However, one trade-off for a the generality of their framework is a loss in precision.
Environments in many real-world decision-making problems are only partially observable, which signifies the importance of MBRL methods to solving POMDPs.
However, relevant and integrated discussion is omitted in~\citet{hafner2020action}.
In contrast,
we focus on the joint perception and control as inference for POMDPs and provide a specialized joint-objective as well as a practical implementation.

\paragraph{Model-based Deep Reinforcement Learning}
MBRL algorithms have been shown to be effective in various tasks~\citep{Gu:2016:CDQ:3045390.3045688},
including operating in environments with high-dimensional raw pixel observations~\citep{pmlr-v80-igl18a,10.1007/11564096_35,watter2015embed,levine2016end,finn2017deep}.
The method most closely related to ours is the World Model~\citep{ha2018worldmodels}, which consists of offline and separately trained models for vision, memory, and control.
These methods typically produce entangled latent representations for pixel observations whereas, for real world tasks such as reasoning and physical interaction, it is often necessary to identify and manipulate multiple entities and their relationships for optimal performance.
Although~\citet{Zambaldi2018RelationalDR} has used the relational mechanism to discover and reason about entities,
their model needs additional supervision of location information.

\paragraph{Object-based Reinforcement Learning}
The object-based approach, which recognizes decomposed objects from the environment observations, has attracted considerable attention in RL as well~\citep{schmidhuber1992learning}.
However, most models often use pre-trained object-based representations rather than learning them from high-dimensional observations~\citep{Diuk:2008:ORE:1390156.1390187}.
When objects are extracted through learning methods, these models usually require supervised modeling of the object property,
by either comparing the activation spectrum generated from neural network filters with existing types~\citep{garnelo2016towards} or leveraging the bounding boxes generated by standard object detection algorithms~\citep{keramati2018strategic}.
MOREL~\citep{NIPS2018_7811} applies optical flow in video sequences to learn the position and velocity information as input for model-free RL frameworks.

A distinguishing feature of our work in relation to previous works in MBRL and the object-based RL is that we provide the decision-making process with object-based abstractions of high-dimensional observations in an unsupervised manner, which
contribute to faster learning. 

\paragraph{Unsupervised Object Segmentation}
Unsupervised object segmentation and representation learning have seen several recent breakthroughs, such as IODINE~\citep{DBLP:conf/icml/GreffKKWBZMBL19}, MONet~\citep{Burgess2019MONetUS}, and GENESIS~\citep{engelcke2019genesis}.
Several recent works have investigated the unsupervised object extraction for reinforcement learning as well~\citep{NIPS2018_8187,asai2017classical,kulkarni2019unsupervised,watters2019cobra}.
Although OPC is built upon a previous unsupervised object segmentation back-end~\citep{greff2017neural,van2018relational},
we explore one step forward by proposing a joint framework for perceptual grouping and decision-making.
This could help an agent to discover structured objects from raw pixels so that it could better tackle its decision problems.
Our framework also adheres to the Bayesian brain hypothesis by maintaining and updating a compact perception model towards the cause of particular observations~\citep{FristonNature2010}.

\section{Methods}
We start by introducing the environment as a partially observable Markov Decision Process (POMDP) with an object-based observation distribution in Sect.~\ref{subsec:env}\footnote{Note that the PCI framework is designed for general POMDPs. We extend Sect.~\ref{subsec:env} to object-based POMDPs for the purpose of introducing the environment setting for OPC.}.
We then introduce PCI,
a general framework for joint perception and control as inference in Sect.~\ref{subsec:perception_from_control} and arrive at a joint objective for perception and control models.
In the remainder of this section we propose OPC,
a practical method to optimize the joint objective in the context of an object-based environment,
which requires the model to exploit the compositional structure of a visual scene.
\subsection{Environment Setting}
\label{subsec:env}
We define the environment as a POMDP represented by the tuple 
$\Gamma={\langle} \mathcal{S}, \mathcal{P}, \mathcal{A}, \mathcal{X}, \mathcal{U}, \mathcal{R} {\rangle}$,
%, b_0 {\rangle}$,
where $\mathcal{S}, \mathcal{A}, \mathcal{X}$
are the state space, the action space, and the observation space, respectively.
At time step $t$, we consider an agent's observation $\vx^t \in \mathcal{X} \equiv \mathbb{R}^{D}$ as a visual image (a matrix of pixels) composited of $K$ objects,
where each pixel $x_i$ is determined by exactly one object.
The agent receives $\vx^t$ following the conditional observation distribution
$\mathcal{U}(\vx^{t}|\vs^{t}): \mathcal{S} \rightarrow \mathcal{X}$,
where the hidden state $\vs^t$ is defined by the tuple $(\vz^t, \bm\theta^t_1, \dots, \bm\theta^t_K)$.
Concretely,
we denote as $\vz^t \in \mathcal{Z} \equiv [0, 1]^{D \times K}$ the latent variable which encodes the unknown true pixel assignments,
such that $z_{i,k}^t=1$ iff pixel $z_i^t$ was generated by component $k$.
Each pixel $x_i^{t}$ is then rendered by its corresponding object representations
$\bm\theta^t_k \in \mathbb{R}^{M}$ through a pixel-wise distribution
$\mathcal{U}_{\psi_{i,k}^t}(x_i^{t}|z_{i,k}^{t}=1)$~\footnote{We consider $\mathcal{U}$ as Gaussian, i.e.,
$\mathcal{U}_{\psi_{i,k}^t}(x_i^{t}|z_{i,k}^{t}=1) \sim \mathcal{N}(x_i^{t}; \mu=\psi_{i,k}^t, \sigma^{2})$ for some fixed $\sigma^{2}$.},
where $\psi_{i,k}^t = f_{\phi}(\bm\theta^t_k)_i$ is generated by feeding $\bm\theta^t_k$ into a differentiable non-linear function $f_{\phi}$.
When the environment receives an action $a^{t}\in\mathcal{A}$,
it moves to a new state $\vs^{t+1}$ following the transition function
$\mathcal{P}(\vs^{t+1}|\vs^t,a^t): \mathcal{S} \times \mathcal{A} \rightarrow \mathcal{S}$.
We assume the transition function could be parameterized and we integrate its parameter into $\phi$.
To embed the control problem into the graphical model,
we also introduce an additional binary random variable $o^{t}$ to represent the \emph{optimality} at time step $t$,
i.e., $o^t=1$ denotes that time step $t$ is optimal, and $o^t=0$ denotes that it is not optimal.
We choose the distribution over $o^{t}$ to be
$p(o^{t}=1|\vs^{t},a^{t})\propto\exp(r^{t})$,
where $r^{t}\in\mathbb{R}$ is the observed reward provided by the environment according to the reward function
$\mathcal{R}(r^{t}|\vs^{t},a^{t}): \mathcal{S} \times \mathcal{A} \rightarrow \mathbb{R}$.
We denote the distribution over initial state as $p(\vs^1): \mathcal{S} \rightarrow [0,1]$.

\subsection{Joint Perception and Control as Inference}
\label{subsec:perception_from_control}
To formalize the belief about the unobserved hidden cause of the history observation $\vx^{\le t}$,
the agent maintains a perception model $q^{w}(\vs)$ to approximate the distribution over latent states as illustrated in Fig.~(\ref{fig:free_energy}).
An agent's inferred belief about the latent state could serve as a sufficient statistic of the history and be used as input to its policy $\pi$, which guides the agent to act in the environment.
The goal of the agent is to maximize the future optimality $o^{\ge t}$ while learning a perception model by inferring unobserved temporal hidden states given the past observations $\vx^{\le t}$ and actions $a^{\le t}$,
which is achieved by maximizing the following objective:
{
\small
\begin{align}
\log p(o^{\ge t}, \vx^{\le t}| a^{<t})
&=\log\sum_{\vs^{\ge 1},a^{\ge t}, \vx^{> t}}p(o^{\ge t},\vs^{\ge 1},\vx^{\ge 1},a^{\ge t}| a^{<t})\nonumber\\
&=\log\left[\sum_{\vs^{\le t}}q^{w}\frac{\prod^{t}_{j=1}p(\vx^{j}|\vs^{j})p(\vs^{1})\prod_{m=1}^{t-1}p(\vs^{m+1}|\vs^{m}, a^{m})}{q^{w}}\left[\textcircled{1}\right]\right],
\label{eq:intermediate_lw}
\end{align}
}
where we denote
$
q^{w} \doteq q(\vs^{\le t}|\vx^{\le t }, a^{< t})
$
and use $\textcircled{1}$ to represent the term related to control as
{
\small
$$
\textcircled{1} = \sum_{\vs^{> t},a^{\ge t}, \vx^{> t}}\prod_{n=t}p(o^n|\vs^n, a^n)\prod_{k=t}p(\vx^{k+1}|\vs^{k+1})p(\vs^{k+1}|\vs^{k}, a^{k})p(a^k|a^{<t}).
$$
}
The full derivation is presented in Appendix~\ref{appendix:intermediate_lw_deriv}.
We denote 
$
q^{w} \doteq q(\vs^{\le t}|\vx^{\le t }, a^{< t})
$ and assume
$
q^{w}=p(\vs^1)\prod_{g=2}^{t}q(\vs^g|\vx^{< g}, a^{< g})
$,
where we slightly abuse notation for $q^{w}$ by ignoring the fact that we sample from the model $p(\vs^1)$ for $t=1$.
We then apply Jensen's inequality to Eq.(\ref{eq:intermediate_lw}) and get
{
\small
\begin{align*}
\log &\ p(o^{\ge t}, \vx^{\le t}| a^{<t})
\ge \mathbb{E}_{\vs^{\le t}\sim q^{w}}\left[\log\frac{\prod^{t}_{j=1}p(\vx^{j}|\vs^{j})p(\vs^{1})\prod_{m=1}^{t-1}p(\vs^{m+1}|\vs^{m}, a^{m})}{p(\vs^1)\prod_{g=2}^{t}q(\vs^g|\vx^{< g}, a^{< g})}\right] +\mathbb{E}_{\vs^{\le t}\sim q^{w}}\left[\log \textcircled{1}\right]\nonumber \\
= & \underbrace{\mathbb{E}_{\vs^{\le t}\sim q^{w}}\left[\log\prod^{t}_{j=1}p(\vx^{j}|\vs^{j})\right] - D_{KL} \left[\prod_{g=1}^{t - 1}q(\vs^{g + 1}|\vx^{\le g}, a^{\le g}) \| p(\vs^{g+1}|\vs^{g}, a^{g}) \right]}_{\mathcal{L}^{w}(q^{w}, \phi)}
+\mathbb{E}_{\vs^{\le t}\sim q^{w}}\left[\log \textcircled{1}\right]\nonumber,
\end{align*}
}
where $D_{KL}$ represents the Kullback–Leibler divergence~\citep{kullback1951information}.
As introduced in Sect.~\ref{subsec:env},
we parameterize the transition distribution $p_{\phi}(\vs^{t+1}|\vs^{t}, a^{t})$ and the observation distribution $p_{\phi}(\vx^{t}|\vs^{t})$ by $\phi$.
An instantiation to optimize the above evidence lower bound (ELBO) $\mathcal{L}^{w}(q^{w}, \phi)$ in an environment with explicit physical properties and high-dimensional pixel-level observations will be discussed in Sect.~\ref{subsec:perception_model_update}.

We now derive the control objective by extending $\textcircled{1}$ as 
{
\small
\begin{align*}
\log\sum_{\vs^{> t},a^{\ge t}, \vx^{> t}}q(\vs^{> t},a^{\ge t}, \vx^{> t})\frac{\prod_{n=t}p(o^n|\vs^n, a^n)\prod_{k=t}p_{\phi}(\vx^{k+1}|\vs^{k+1})p_{\phi}(\vs^{k+1}|\vs^{k}, a^{k})p(a^k|a^{<t})}{q(\vs^{> t},a^{\ge t}, \vx^{> t})}
\end{align*}
}
where we assume
$
q(\vs^{> t},a^{\ge t}, \vx^{> t})=\prod_{h=t}p_{\phi}(\vx^{h+1}|\vs^{h+1})p_{\phi}(\vs^{h+1}|\vs^{h}, a^{h})\pi(a^{h}|\vs^{h})
$
and denote $q^{c} \doteq q(\vs^{> t},a^{\ge t}, \vx^{> t})$.
We then apply Jensen's inequality to $\mathbb{E}_{\vs^{\le t}\sim q^{w}}\left[\log \textcircled{1}\right]$ and get
\begin{equation}
\label{eq:control_obj}
\resizebox{.99\linewidth}{!}{$
    \displaystyle
\begin{aligned}
\mathbb{E}_{\vs^{\le t}\sim q^{w}}\left[\log \textcircled{1}\right]
\ge & \mathbb{E}_{\vs^{\le t}\sim q^{w}}\left[\sum_{\vs^{> t},a^{\ge t}, \vx^{> t}}q^{c}\log\frac{\prod_{n=t}p(o^n|\vs^n, a^n)\prod_{k=t}p_{\phi}(\vx^{k+1}|\vs^{k+1})p_{\phi}(\vs^{k+1}|\vs^{k}, a^{k})p(a^k|a^{<t})}{\prod_{h=t}p_{\phi}(\vx^{h+1}|\vs^{h+1})p_{\phi}(\vs^{h+1}|\vs^{h}, a^{h})\pi(a^{h}|\vs^{h})}\right] \\
=&\mathbb{E}_{\vs^{\le t}\sim q^{w}}\left[\sum_{t}\mathbb{E}_{\vs^{t+1},a^{ t}, \vx^{ t+1}\sim \rho_{\phi} \pi}\left[R(\vs^{t}, a^{t})-D_{KL}(\pi(a^t|\vs^{t})||p(a^t|a^{<t}))\right]\right],
\end{aligned}
$}
\end{equation}
where we denote $\rho_{\phi}$ as $p_{\phi}(\vx^{h+1}|\vs^{h+1})p_{\phi}(\vs^{h+1}|\vs^{h}, a^{h})$.
Note that as this control objective is an expectation under $\rho_{\phi}$, reward maximization also bias the learning of our perception model.
We will propose an option to optimize the control objective Eq.(\ref{eq:control_obj}) in Sect.~\ref{subsec:decision_making_update}. 

\subsection{Object-based Perception Model Update}
\label{subsec:perception_model_update}
We now introduce OPC,
an instantiation of PCI in the context of an object-based environment.
Following the generalized expectation maximization~\citep{dempster1977maximum},
we optimize the ELBO $\mathcal{L}^{w}(q^{w}, \phi)$ by improving the perception model $q^{w}$ about the true posterior $p_{\phi}(\vs^{t+1}|\vx^{t+1},\vs^{t},a^{t})$ with respect to a set of object representations
$\bm\theta^t=[\bm\theta^t_1, \dots, \bm\theta^t_K] \in \Omega \subset \mathbb{R}^{M \times K}$.
\paragraph{E-step to compute a new estimate $q^{w}$ of the posterior.}
Assume the ELBO is already maximized with respect to $q^{w}$ at time step $t$, i.e.,
$q^{w}
= p_{\phi}(\vs^{t+1}_{i}| x^{t + 1}_i,\vs^t,a^t)
= p_{\phi}(\vz^{t+1}_{i}| x^{t + 1}_i,\bm\psi^{t}_{i},a^t)$,
we can generate a soft-assignment of each pixel to one of the $K$ objects as
%\vskip -0.1in
\begin{equation}
\eta_{i,k}^{t} \doteq p_{\phi}(z^{t+1}_{i,k}=1| x^{t + 1}_i,\bm\psi^{t}_{i},a^t).
\label{eq:em_softassignment}
\end{equation}
%\vskip -0.05in
\paragraph{M-step to update the model parameter $\phi$.}
We then find $\bm\theta^{t+1}$ to maximize the ELBO as
\begin{equation}
\bm\theta^{t+1}
= \arg\max_{\bm\theta^{t+1}} 
\mathop{\mathbb{E}}_{\vz^{t+1} \sim \eta^{t}}
\left[
\log p_{\phi}(\vx^{t + 1},\vz^{t+1}, \bm\psi^{t+1}| \vs^t,a^t)
\right]
\doteq \arg\max_{\bm\theta^{t+1}} 
\Lambda_{\bm\theta^t}(\bm\theta^{t+1}).
\label{eq:update_theta}
\end{equation}
See derivation in Appendix~\ref{appendix:update_theta_deriv_new}.
Note that Eq.~(\ref{eq:update_theta}) returns a set of points that maximize $\Lambda_{\bm\theta^t}(\bm\theta^{t+1})$,
and we choose $\bm\theta^{t+1}$ to be any value within this set.
To update the perception model by maximizing the evidence lower bound with respect to $q(\vs^t)$ and $\bm\theta^{t+1}$,
we compute Eq.~(\ref{eq:em_softassignment}) and Eq.~(\ref{eq:update_theta}) iteratively.
However, an analytical solution to
Eq.~(\ref{eq:update_theta}) is not available 
because we use a differentiable non-linear function $f_{\phi}$ 
to map from object representations $\bm\theta^t_k$ into $\psi^t_{i,k} = f_{\phi}(\bm\theta^t_k)_i$. 
Therefore, we get
$
\bm\theta^{t+1}_k
$
by
\begin{equation}
\bm\theta^{t+1}_k 
= \bm\theta^{t}_k + \alpha
\left. \frac { \partial
\Lambda_{\bm\theta^{t+1}}(\bm\theta^{t+2})
} { \partial\bm\theta^{t+1}_k }
\right| _ { \bm\theta^{t+1}_k = \bm\theta^{t}_k }
= \bm\theta^{t}_k + \alpha
\sum_{i=1}^D \eta^{t}_{i,k}
\cdot
\frac{\psi^{t}_{i,k}-x^{t+1}_i}{\sigma^2}
\cdot
\frac{\partial\psi^{t}_{i,k}}{\partial\bm\theta^{t}_k},
\label{eq:expected_loglikelihood_derivative}
\end{equation}
where $\alpha$ is the learning rate
(see details of derivation in Appendix~\ref{appendix:expected_loglikelihood_derivative_new}).

We regard the iterative process as a $t_{ro}$-step rollout of $K$ copies of a recurrent neural network with hidden states $\bm{\theta}^{t}_k$ receiving 
$\bm{\eta}^{t}_{k} \odot (\bm{\psi}^{t}_{k} - \vx^{t+1})$
as input (see the inner loop of Algorithm~\ref{algo:owm}).
Each copy generates a new $\bm{\psi}^{t+1}_{k}$, which is then used to re-estimate the soft-assignments $\bm{\eta}^{t+1}_k$. 
We parameterize the Jacobian $\partial\bm{\psi}^{t}_{k} / \partial\bm{\theta}^{t}_k$ and the differentiable non-linear function $f_{\phi_k}$ using a convolutional encoder-decoder architecture
with a recurrent neural network bottleneck,
\begin{wrapfigure}{r}{0.65\textwidth}
\vspace{-4mm}
\centerline{\includegraphics[width=0.6\columnwidth]{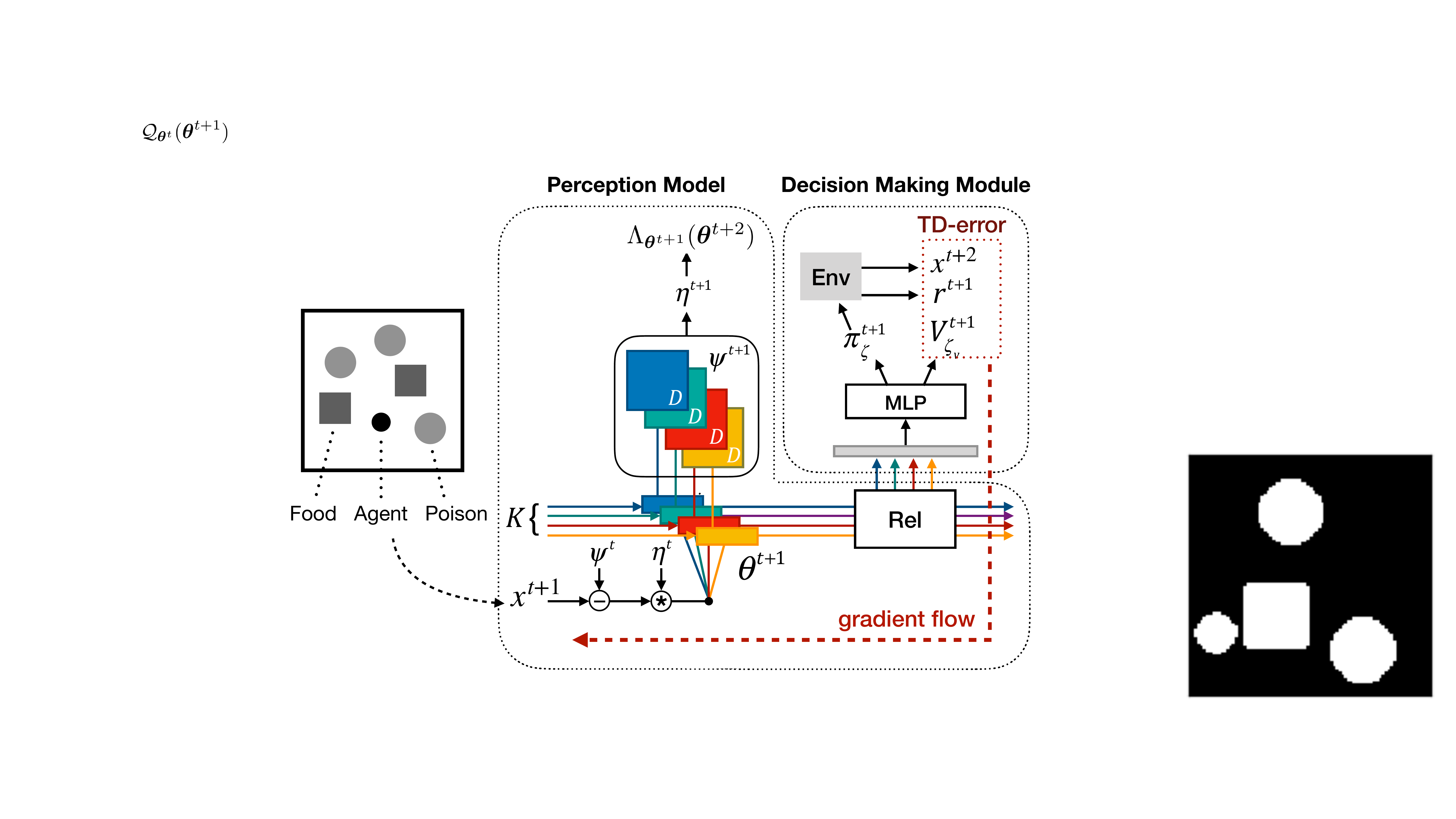}}
\vskip -0.05in
\caption{Illustration of OPC and the Pixel Waterworld environment.
We regard the perception update as $K$ copies of an RNN with hidden states $\bm{\theta}^{t}_k$ receiving 
$\bm{\eta}^{t}_{k} \odot (\bm{\psi}^{t}_{k} - \bm{x}^{t+1})$
as input.
Each copy generates a new $\bm{\psi}^{t+1}_{k}$, which is used to re-estimate the soft-assignments $\bm{\eta}^{t+1}_k$ for the calculation of the expected log-likelihood $\Lambda_{\bm\theta^{t+1}}(\bm\theta^{t+2})$.
The hidden states are then used as the input to decision-making (before being fed into an optional relational module) to guide the agent's action.
}
\label{fig:td-owm}
\vskip -0.1in
\end{wrapfigure}
which linearly combines the output of the encoder with $\bm{\theta}^t_{k}$ from the previous time step.
To fit the statistical model $f_{\phi}$ to capture the regularities corresponding to the observation and the transition distribution for given POMDPs,
we back-propagate the gradient of $\mathop{\mathbb{E}}_{\tau\sim\pi} 
\left[
\Lambda_{\bm\theta^t}(\bm\theta^{t+1})
\right]$
through ``time'' (also known as the BPTT~\cite{werbos1988generalization}) into the weights $\phi$.
We demonstrate the convergence of a sequence of $\left\{ \mathcal{L}^{w}_{\bm\theta^{t}}(q^{w}, \phi) \right\}$
generated by the perception update in Appendix~\ref{appendix:convergence}.
The proof is presented by showing that the learning process follows the Global Convergence Theorem~\citep{zangwill1969nonlinear}.

Our implementation of the perceptual model is based on the unsupervised perceptual grouping method proposed by~\citep{greff2017neural,van2018relational}.
However, using other advanced unsupervised object representations learning methods as the perceptual model (such as IODINE~\citep{DBLP:conf/icml/GreffKKWBZMBL19} and MONet~\citep{Burgess2019MONetUS}) is a straightforward extension.
%as long as they are based on hierarchical inference process,

\begin{algorithm}[t]
	\caption{Learning the Object-based Perception Control}\label{algo:owm}
 \begin{algorithmic}
\small
	\State Initialize $\bm\theta, \bm\eta, \phi, \zeta, \zeta_v, T_{epi}, t_{ro}$, $K$ 
%\STATE \emph{// Assume global shared parameter vectors $\zeta$ and $\zeta_v$ and global shared counter $T=0$}
%\STATE Initialize thread step counter $t\gets 1$
\State Create $N_e$ environments that will execute in parallel
%\STATE Unfold the RNN with $t_{ro}$ instances
\While{training not finished}
%\STATE Reset gradients: $d\theta \gets 0$ and $d\theta_v \gets 0$.
%\STATE Synchronize thread-specific parameters  $\theta'=\theta$ and $\theta'_v=\theta_v$ %
%\STATE Execute initial rollouts for $t_{ro} + 1$ time-steps in each environment.
\State Initialize the history $\mathcal{D}_a, \mathcal{D}_x, \mathcal{D}_{\bm\eta}$ with environment rollouts for $t_{ro} + 1$ time-steps under current policy $\pi_{\zeta}$
\For{$T=1\mbox{~to~}T_{epi}-t_{ro}$}
\State $d\phi \leftarrow 0$
\State Get $\bm{\eta}^{T-1}$ from $\mathcal{D}_{\bm\eta}$
\For{$t=T\mbox{~to~}T+t_{ro}$}
\State Get $a^{t}, \vx^t$ from $\mathcal{D}_a, \mathcal{D}_x$ respectively
\State Feed $\bm{\eta}^{t-1}_{k} \odot (\bm{\psi}^{t-1}_{k} - \vx^{t})$
into each of the $K$ RNN copy to get $\bm\theta^t_k$ and forward-output $\bm{\psi}^{t}_{k}$
\State Compute $\bm{\eta}^{t}_{k}$ by Eq.~\ref{eq:em_softassignment}
\State
%Aggregate the RNN gradient:
$d\phi \leftarrow d\phi + \partial\left(-\Lambda_{\bm\theta^t}(\bm\theta^{t+1}) \right) /{\partial \phi}$
% + \mathcal{L}_{reg}^t \right) /{\partial \phi}$
by Eq.~(\ref{eq:update_theta}) %~\&~(\ref{eq:rnn_regularizer})
\EndFor
%\STATE Store $\bm{\eta}^{T}$ in $\mathcal{D}_{\bm\eta}$
\State Perform $a^{T+t_{ro}}$ according to policy
$\pi_{\zeta}(a^{T+t_{ro}}|\bm\theta^{T+t_{ro}})$
\State Receive reward $r^{T+t_{ro}}$ and new observation $\vx^{T+t_{ro}+1}$
\State Store $a^{T+t_{ro}},\vx^{T+t_{ro}+1},\bm{\eta}^{T}$ in $\mathcal{D}_a, \mathcal{D}_x, \mathcal{D}_{\bm\eta}$ respectively
\State Feed $\bm{\eta}^{T+t_{ro}}_{k} \odot (\bm{\psi}^{T+t_{ro}}_{k} - \vx^{T+t_{ro}+1})$
into each of the $K$ RNN copy to get $\bm\theta^{T+t_{ro}+1}_k$
\State $y \gets r^{T+t_{ro}} + \gamma V_{\zeta_v}(\bm\theta^{T+t_{ro}+1})$
\State 
%Compute gradients wrt $\zeta$:
$d\zeta \gets \nabla_{\zeta} \log\pi_{\zeta}(a^{T+t_{ro}}|\bm\theta^{T+t_{ro}}) (y - V_{\zeta_v}(\bm\theta^{T+t_{ro}}))$
\State 
%Compute gradients wrt $\zeta_v$:
$d\zeta_v \gets {\partial\left(y - V_{\zeta_v}(\bm\theta^{T+t_{ro}})\right)^2}/{\partial \zeta_v}$
\State 
%Aggregate gradients wrt $\phi$:
$d\phi \gets d\phi + {\partial\left(y - V_{\zeta_v}(\bm\theta^{T+t_{ro}})\right)^2}/{\partial \phi}$
\EndFor
\State Perform synchronous update of $\phi$ using $d\phi$, of $\zeta$ using $d\zeta$, and of $\zeta_v$ using $d\zeta_v$
\EndWhile
\end{algorithmic}
\end{algorithm}
%\vskip -0.3in
%
\subsection{Decision-making Module Update}
\label{subsec:decision_making_update}
Recall that the control objective in Eq.(\ref{eq:control_obj}) is the sum of 
the expected total reward along trajectory $\rho_{\theta} \pi$
and the negative KL divergence between the policy and our prior about the future actions. The prior can reflect a preference over policies. For example, a uniform prior in SAC~\citep{pmlr-v80-haarnoja18b} corresponds to the preference for a simple policy with minimal assumptions. This could prevent early convergence to sub-optimal policies. However, as we focus on studying the benefit of combing perception with control, we do not pre-impose a preference for policies. Therefore we set the prior equal to our training policy, resulting a zero KL divergence term in the objective.

To maximize the total reward along trajectory $\rho_{\theta} \pi$, 
we follow the conventional temporal-difference (TD) learning approach~\citep{Sutton1988}
by feeding the object abstractions into a small multilayer perceptron (MLP) \citep{Rumelhart:1986:LIR:104279.104293} to produce a
$(\dim_\mathbb{R}(\mathcal{A}) + 1)$-dimensional vector,
which is split into a $\dim_\mathbb{R}(\mathcal{A})$-dimensional vector of $\pi_\zeta$'s (the `actor') logits,
and a baseline scalar $V_{\zeta_v}$ (the `critic').
The $\pi_\zeta$ logits are normalized using a softmax function,
and used as the multinomial distribution from which an action is sampled.
The $V_{\zeta_v}$ is an estimate of the state-value function at the current state,
which is given by the last hidden state $\bm\theta$ of the $t_{ro}$-step RNN rollout.
On training the decision-making module, the $V_{\zeta_v}$ is used to compute the temporal-difference error given by
%\vskip -0.15in
{
\small
\begin{equation}\label{eq:loss_td}
\mathcal{L}_{TD} = (y^{t+1}-V_{\zeta_v}(\bm\theta^{t+1}))^2,
%\quad
y^{t+1}=r^{t+1}+\gamma V_{\zeta_v}(\bm\theta^{t+2}),
\end{equation}
}
%\vskip -0.05in
where
$\gamma \in [0, 1)$ is a discount factor.
$\mathcal{L}_{TD}$ is used both to optimize $\pi_\zeta$ to generate actions with larger total rewards than $V_{\zeta_v}$ predicts by updating $\zeta$ with respect to the policy gradient %\cite{Sutton:1999:PGM:3009657.3009806}
%\vskip -0.12in
{
\small
$$
\nabla_{\zeta} \log\pi_{\zeta}(a^{t+1}|\bm\theta^{t+1}) (y - V_{\zeta_v}(\bm\theta^{t+1})),
$$
}
%\vskip -0.05in
and to optimize $V_{\zeta_v}$ to more accurately estimate state values by updating $\zeta_v$.
Also, differentiating $\mathcal{L}_{TD}$ with respect to $\phi$
enables the gradient-based optimizers to update the perception model.
We provide the pseudo-code for one-step TD-learning of the proposed model in Algorithm~\ref{algo:owm}.
By grouping objects concerning the reward,
our model distinguishes objects with visual similarities but different semantics,
thus helping the agent to better understand the environment.

\section{Experiments}

\subsection{Pixel Waterworld}
We demonstrate the advantage of unifying decision making and perception modeling by applying OPC on an environment similar to the one used in COBRA~\citep{watters2019cobra},
a modified Waterworld environment~\citep{KarpathyWaterWorld},
where the observations are $84\times84$ grayscale raw pixel images composited of an agent and two types of randomly moving targets:
the poison and the food, as illustrated in Fig.~(\ref{fig:td-owm}).
The agent can control its velocity by choosing from four available actions: to apply the thruster to the left, right, up and down.
A negative cost is given to the agent if it touches any poison target,
while a positive reward for making contact with any food target.
The optimal strategy depends on the number, speed, and size of objects,
thus requiring the agent to infer the underlying dynamics of the environment within a given amount of observations.

The intuition of this environment is to test whether the agent can quickly learn dynamics of a new environment online without any prior knowledge,
i.e., the execution-time optimization throughout the agent's life-cycle to mimic humans' spontaneous process of obtaining inductive biases.
We choose Advantage Actor-Critic (A2C)~\citep{pmlr-v48-mniha16} as the decision-making module of OPC without loss of generality,
although other online learning methods are also applicable.
For OPC, we use $K=4$ and $t_{ro}=20$ except for Sect.~\ref{subsec:hyperparameter},
where we analyze the effect of the hyper-parameter setting.

\subsection{Accumulated Reward Comparisons}
\begin{wrapfigure}{r}{0.72\textwidth}
\vspace{-4mm}
\centering
	\begin{subfigure}[b]{.49\linewidth}
		\centering
		\includegraphics[width=.98\columnwidth]{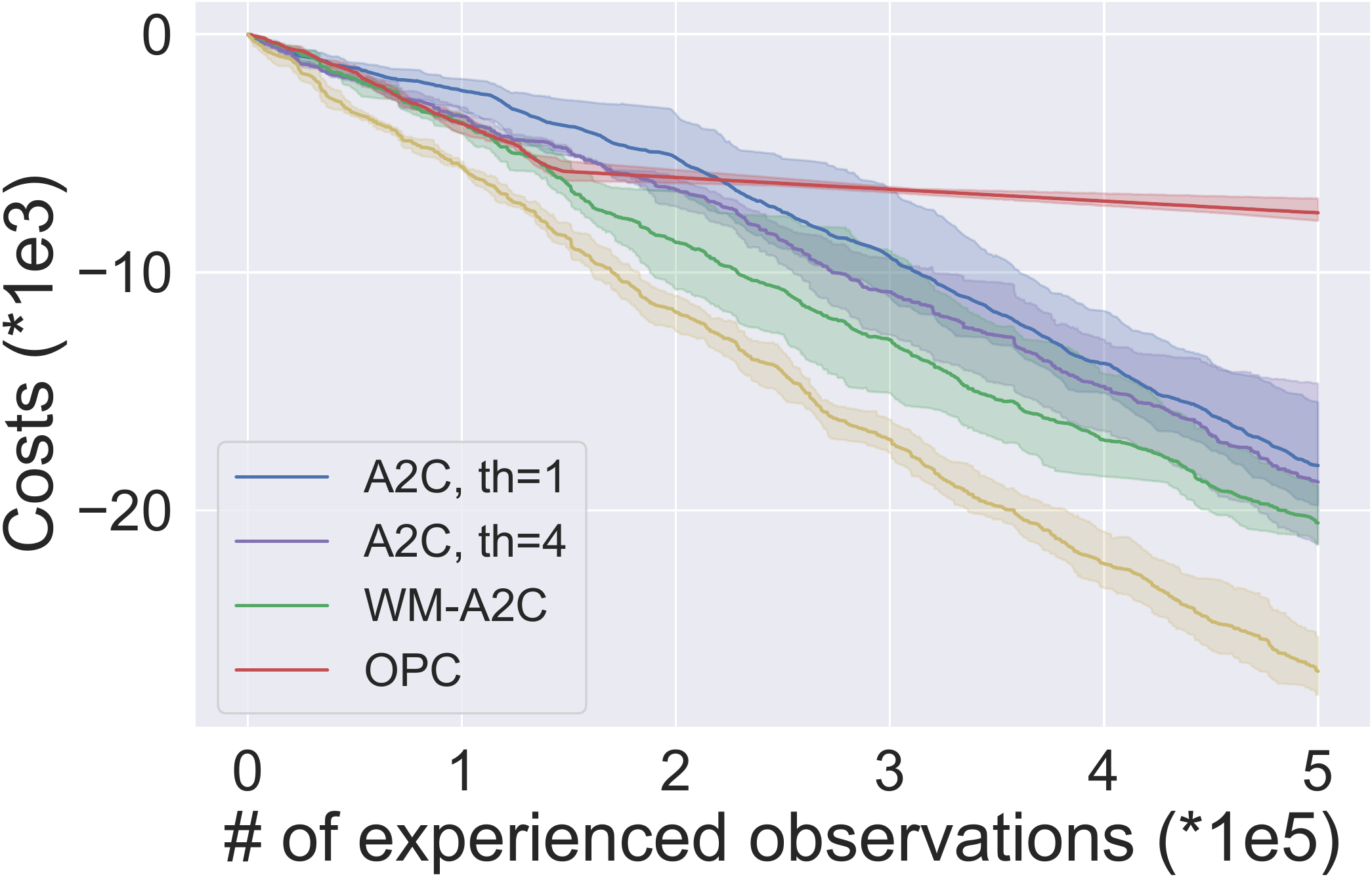}
		\vskip -0.05in
		\caption{}
		\label{subfig:k4_a2c_wm_random}
	\end{subfigure}
%	\quad\quad
	\begin{subfigure}[b]{.49\linewidth}
		\centering
		\includegraphics[width=.98\columnwidth]{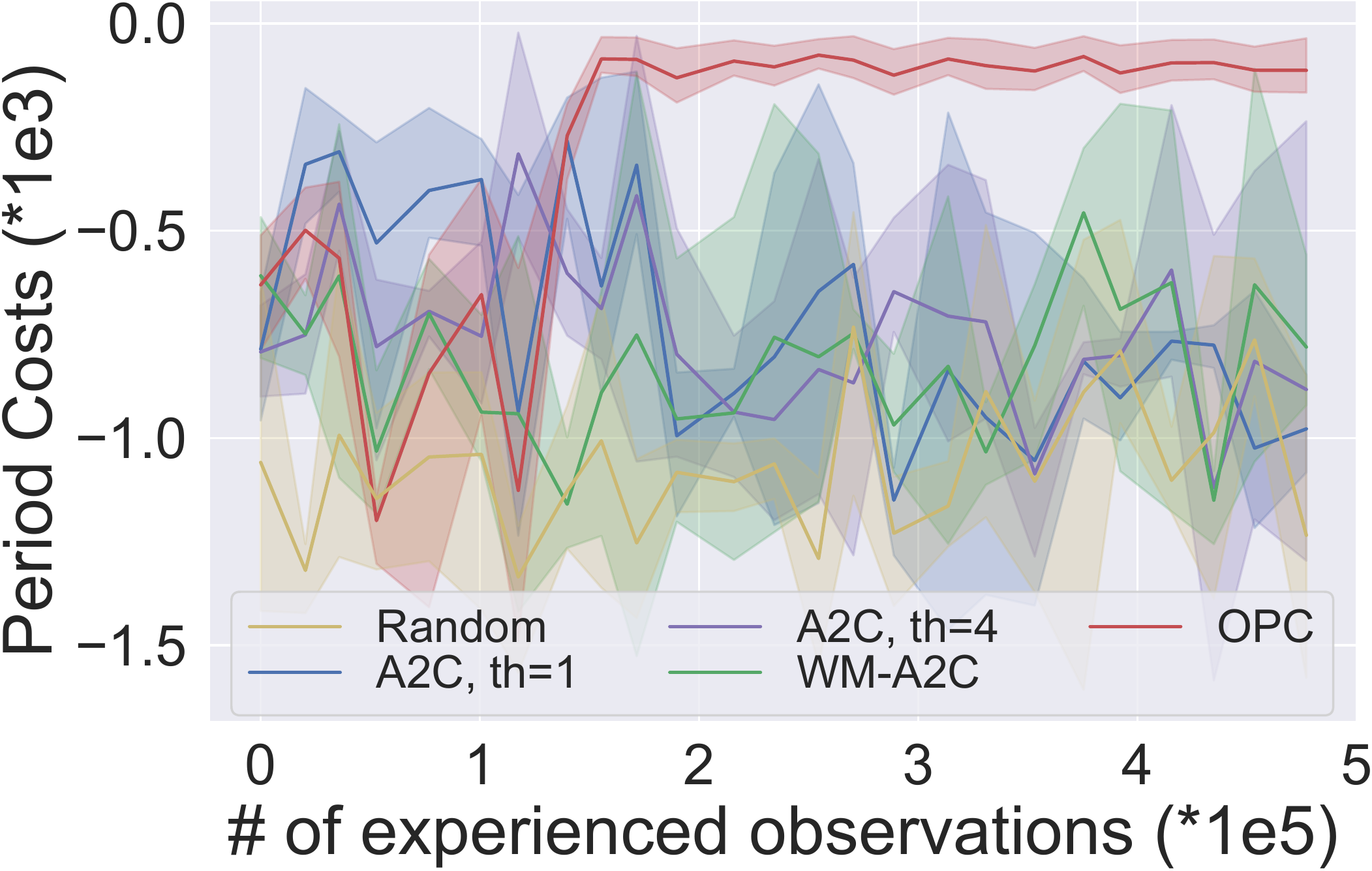}
		\vskip -0.05in
		\caption{}
		\label{subfig:k4_a2c_wm_random_gradient}
	\end{subfigure}
	\\
	\begin{subfigure}[b]{.49\linewidth}
		\centering
		\includegraphics[width=.98\columnwidth]{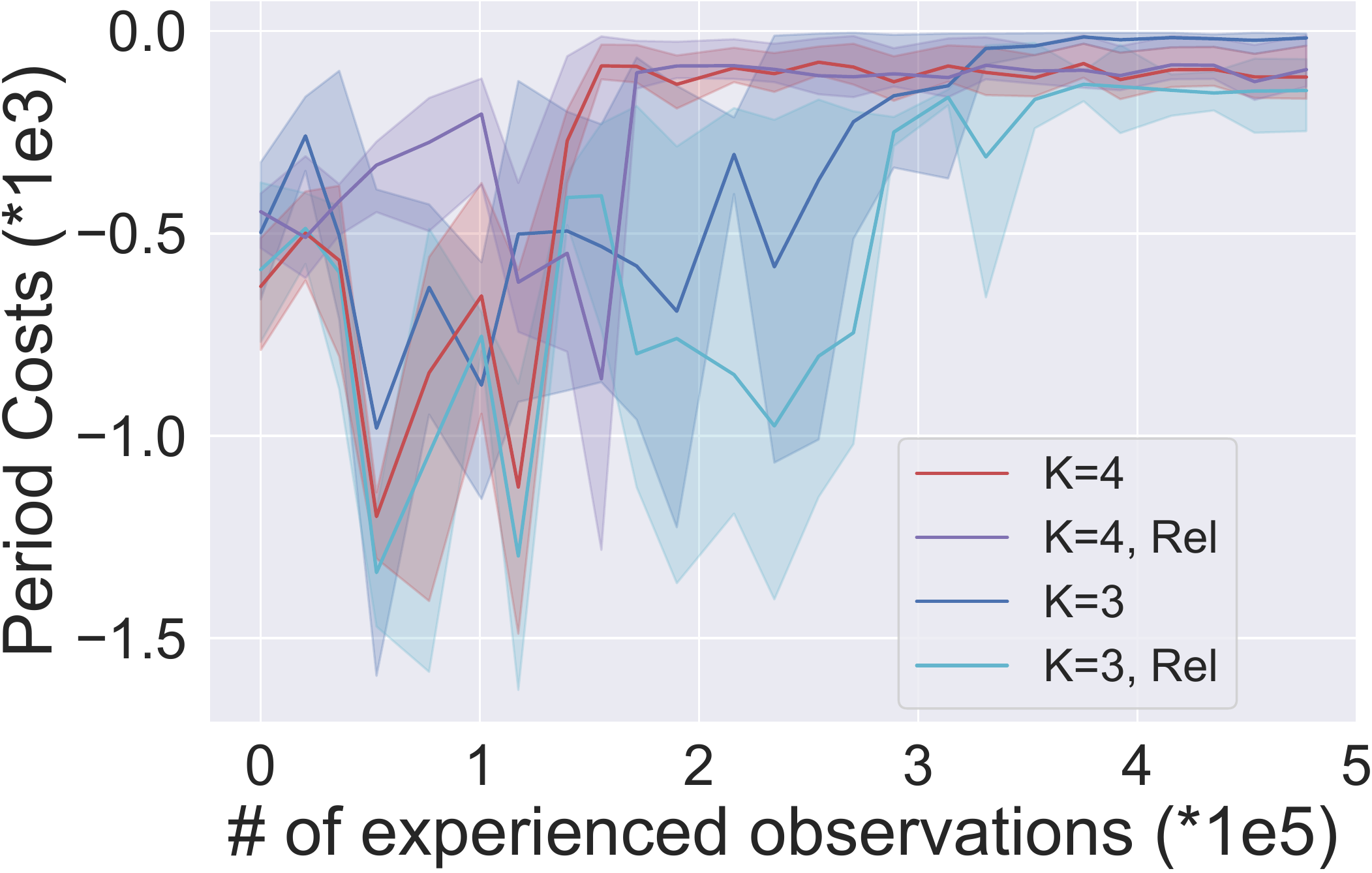}
		\vskip -0.05in
		\caption{}
		\label{subfig:k3_k4_rel_gradient}
	\end{subfigure}
%	\quad\quad
	\begin{subfigure}[b]{.49\linewidth}
		\centering
		\includegraphics[width=1\columnwidth]{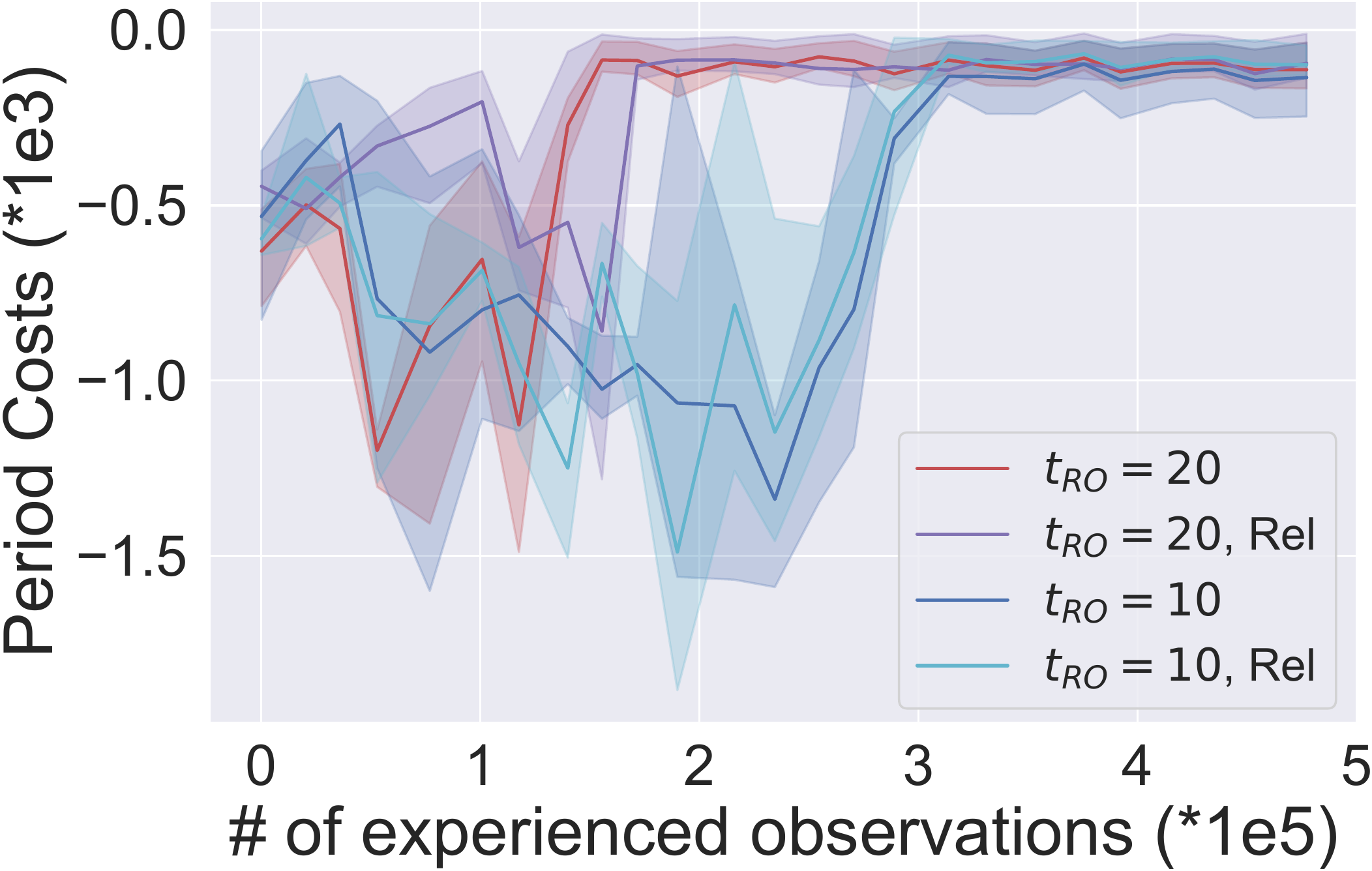}
		\vskip -0.05in
		\caption{}
		\label{subfig:long_short_rel_gradient}
	\end{subfigure}
	\vskip -0.1in
	\caption{Performance comparisons between methods.}
	\vskip -0.1in
	\label{fig:k4_a2c_wm_random}
\end{wrapfigure}

To verify object-level inductive biases facilitating decision making,
we compare OPC against a set of baseline algorithms, including:
1) the standard A2C,
which uses convolutional layers to transform the raw pixel observations to low dimensional vectors as input for the same MLP described in Sect.~\ref{subsec:decision_making_update},
2) the World Model~\citep{ha2018worldmodels} with A2C for control (WM-A2C),
a state-of-the-art model-based approach, which separately learns a visual and a memorization model to provide the input for a standard A2C,
and 3) the random policy.
For both the baseline A2C, WM-A2C and the decision-making module of OPC,
we follow the convention of~\citet{pmlr-v48-mniha16} by running the algorithm in the forward view and using the same mix of $n$-step returns to update both $\pi_{\zeta}$ and $V_{\zeta_v}$.
Details of the model and of hyperparameter setting can be found in Appendix~\ref{app:experiment_details}.
We build the environment with two poison objects and one food object,
and set the size of the agent $1.5$ times smaller than the target.
Results are reported with the average value among separate runs of three random seeds.
Note that the training procedure of WM-A2C includes independent off-line training for each component,
thus requiring many more observation samples than OPC.
Following its original early-stopping criteria,
we report that WM-A2C requires 300 times more observation samples for separately training a perception model before learning a policy to give the results presented in Fig.~(\ref{fig:k4_a2c_wm_random}).

Fig.~(\ref{subfig:k4_a2c_wm_random}) shows the result of accumulated rewards after each agent has experienced the same amount of observations.
Note that OPC uses the plotted number of experienced observations for training the entire model,
while WM-A2C uses the amount of observations only for training the control module.
It is clear that the agent with OPC achieves the greatest performance, outperforming all other agents regardless of model in terms of accumulated reward.
We believe this advantage is due to the object-level inductive biases obtained by the perception model (compared to entangled representations extracted by the CNN used in standard A2C),
and the unified learning of perception and control of OPC (as opposed to the separate learning of different modules in WM-A2C).

To illustrate each agent's learning process through time,
we also present the period reward,
which is the accumulated reward during a given period of environment interactions ($2e4$ of experienced observations).
As illustrated in Fig.~(\ref{subfig:k4_a2c_wm_random_gradient}),
OPC significantly improves the sample efficiency of A2C,
which enables the agent acting in the environment to find an optimal strategy more quickly than agents with baseline models.
We also find that the standard version of A2C with four parallel threads gives roughly the same result as the single-threaded version of A2C (the same as the decision-making module of OPC),
eliminating the potential drawback of single-thread learning.

\subsection{Perceptual Grouping Results}
To demonstrate the performance of the perception model,
we provide an example rollout at the early stage of training in Fig.~(\ref{fig:rnem_sample}).
We assign a different color to each $\bm\psi_k$ for better visualization 
\begin{wrapfigure}{r}{0.5\textwidth}
%\vspace{-4mm}
\centering
	\includegraphics[width=0.4\columnwidth]{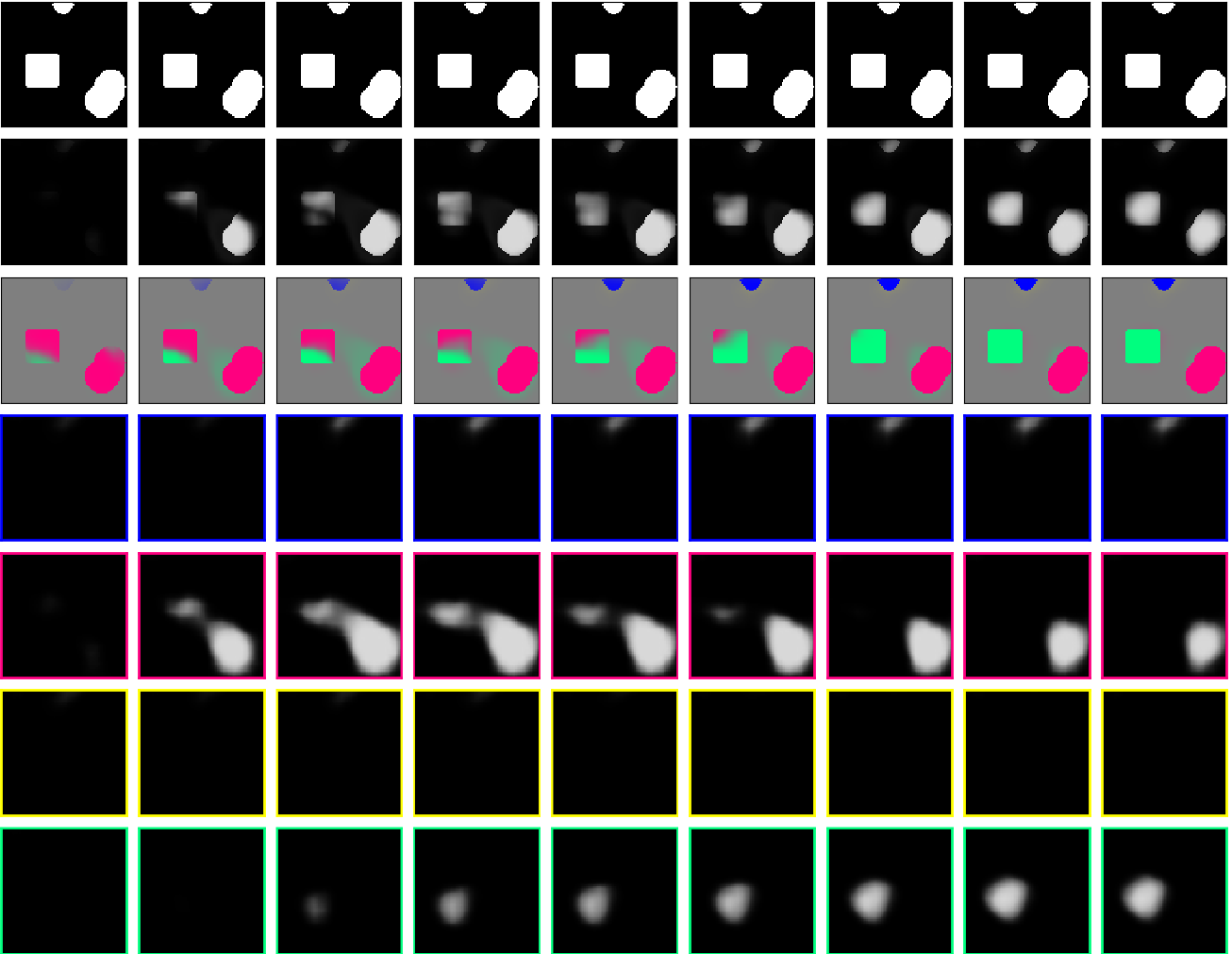}
	\vskip -0.05in
	\caption{A sample rollout of perceptual grouping by OPC. The observation (top row), the next-step prediction $\bm\psi_k$ of each copy of the $K$ RNN copy (rows 4 to 7), the $\sum_{k}\bm\psi_{k}$ (row 2), and the soft-assignment $\bm\eta$ of the pixels to each of the copies (row 3).
	We assign a different color to each $\bm\psi_k$ for better visualization.
	This sample rollout shows that all objects are grouped semantically.}
%	: the agent in blue, the food in green, and both poisons in red.}
	\label{fig:rnem_sample}
%\vskip -0.1in
\vspace{-12mm}
\end{wrapfigure}
and show through the soft-assignment $\bm\eta$ that all objects are grouped semantically:
the agent in blue, the food in green, and both poisons in red.
During the joint training of perception and policy,
the soft-assignment gradually constitutes a semantic segmentation as the TD signal improves the recognition when grouping pixels into different types based on the interaction with the environment.
Thus, the learned inductive biases become semantically specialized.
Consequently, the learned object representations $\bm\theta$ becomes a semantic interpretation of raw pixels grouped into perceptual objects.

\subsection{Ablation Study on Hyper-parameters}
\label{subsec:hyperparameter}
We also investigate the effect of hyper-parameters to the learning of OPC,
by changing:
1) the number of recurrent copies $K$,
2) the rollout steps for recurrent iteration $t_{ro}$,
and 3) the use of relational mechanism across object representations $\bm\theta$ (see Sect. 2.2 of~\citet{van2018relational} for more details).
Fig.~(\ref{subfig:k3_k4_rel_gradient}) and Fig.~(\ref{subfig:long_short_rel_gradient}) show period reward results across different hyper-parameter settings.

As illustrated in Fig.~(\ref{subfig:k3_k4_rel_gradient}),
the number of recurrent copies affects the stability of OPC learning,
as OPC with $K=3$ has experienced larger variance during the agent's life-cycle.
We believe the difference comes from the environment dynamics as we have visually four objects in the environment.
During the earlier stage of interacting with the environment,
OPC tries to group each object into a distinct class;
thus, a different number of $K$ against the number of objects in the environment confuse the perception model and lead to unstable learning.
Although different $K$ settings might affect the learning stability and slow down the convergence,
OPC can still find an optimal strategy within a given amount of observations.

In Fig.~(\ref{subfig:long_short_rel_gradient}),
we compare OPC with different steps of recurrent rollout $t_{ro}$.
A smaller $t_{ro}$ means fewer rounds of perception updates
% thus suffering from
and therefore
slower convergence in terms of the number of experienced observations.
We believe that the choice of $t_{ro}$ depends on the difficulty of the environment,
e.g.,
a smaller $t_{ro}$ can help to find the optimal strategy more quickly for simpler environments in terms of wall training time.

Meanwhile, results in Fig.~(\ref{subfig:k3_k4_rel_gradient}) and Fig.~(\ref{subfig:long_short_rel_gradient}) show that the use of a relational mechanism has limited impact on OPC,
possibly because the objects can be well distinguished and perceived by their object-level inductive biases, i.e. shapes in our experiment.
We believe that investigating whether the relational mechanism will have impact on environments where entities with similar object-level inductive bias have other different internal properties is an interesting direction for future work.

\section{Conclusions}
In this paper, we propose joint Perception and Control as Inference (PCI),
a general framework to combine perception and control for POMDPs through Bayesian inference.
We then extend PCI to the context of a typical pixel-level environment with compositional structure and propose Object-based Perception Control (OPC),
an instantiation of PCI which manages to facilitate control with the help of automatically discovered object-based representations.
We provide the convergence proof of OPC perception model update
and demonstrate the execution-time optimization ability of OPC in a high-dimensional pixel environment.
Notably, our experiments show that OPC achieves high quality and consistent perceptual grouping and outperforms several strong baselines in terms of accumulated rewards within the agent's life-cycle.
OPC agent can quickly learn the dynamics of a new environment without any prior knowledge,
imitating the inductive bias acquisition process of humans.
For future work,
we would like to investigate OPC with more types of inductive biases and test the model performance in a wider variety of environments.

%% The file named.bst is a bibliography style file for BibTeX 0.99c 
%\fontsize{9.0pt}{10.0pt}
\bibliographystyle{named}
\bibliography{aaai_main}
\newpage
%\onecolumn
\appendix

\section{Convergence of the Object-based Perception Model Update}
\label{appendix:convergence}
Under main assumptions and lemmas as introduced below,
we demonstrate the convergence of a sequence of $\left\{ \mathcal{L}^{w}_{\bm\theta^{t}}(q^{w}, \phi) \right\}$ generated by the perception update.
The proof is presented by showing that the learning process follows the Global Convergence Theorem \citep{zangwill1969nonlinear}.
\begin{assumption}\label{assum:compact}
$\Omega _ { \bm\theta ^ { 0 } } = \left\{ \bm\theta \in \Omega : \mathcal{L}^{w}_{\bm\theta}(q^{w}, \phi) \leq \mathcal{L}^{w}_{\bm\theta^{0}}(q^{w}, \phi) \right\}$
is compact for any $\mathcal{L}^{w}_{\bm\theta^{0}}(q^{w}, \phi) < \infty$.
\end{assumption}
\begin{assumption}\label{assum:conti}
$\mathcal{L}$ is continuous in $\Omega$ and differentiable in the interior of $\Omega$.
\end{assumption}
The above assumptions lead to the fact that
$\{ \mathcal{L}^{w}_{\bm\theta^{t}}(q^{w}, \phi) \}$
is bounded for any 
$\bm\theta ^ { 0 } \in \Omega$.
\begin{lemma}\label{lemma:closedness_of_max}
Let $\Omega_S$ be the set of stationary points in the interior of $\Omega$,
then the mapping
$\arg\max_{\bm\theta^{t+1}} \Lambda_{\bm\theta^t}(\bm\theta^{t+1})$
from Eq.~\ref{eq:update_theta}
is closed over $\Omega \backslash \Omega_S$ (the complement of $\Omega_S$).
\end{lemma}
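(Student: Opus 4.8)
The plan is to verify closedness straight from the definition of a closed point-to-set map, since that is precisely the hypothesis on the iteration map demanded by the Global Convergence Theorem~\citep{zangwill1969nonlinear}. Write $M(\bm\theta^t) \doteq \argmax_{\bm\theta^{t+1}} \Lambda_{\bm\theta^t}(\bm\theta^{t+1})$ for the idealized M-step map of Eq.~(\ref{eq:update_theta}). Recall that $M$ is \emph{closed} at $\bm\theta^\star$ if, whenever $\bm\theta^t_{(n)} \to \bm\theta^\star$ and $\bm\theta^{t+1}_{(n)} \in M(\bm\theta^t_{(n)})$ with $\bm\theta^{t+1}_{(n)} \to \bm\theta^{\star\star}$, one necessarily has $\bm\theta^{\star\star} \in M(\bm\theta^\star)$. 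The approach is essentially a specialization of Berge's maximum theorem: because the two-argument objective $\Lambda_{\bm\theta^t}(\bm\theta^{t+1})$ is jointly continuous, the defining optimality inequality of a maximizer survives passage to the limit, which forces every limit of maximizers to be itself a maximizer.

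First I would fix $\bm\theta^\star \in \Omega \setminus \Omega_S$ together with the two convergent sequences above; Assumption~\ref{assum:compact} keeps the iterates inside the compact sublevel set $\Omega_{\bm\theta^0}$, so the limits $\bm\theta^{\star\star}$ exist along subsequences and each $M(\bm\theta^t_{(n)})$ is nonempty by continuity plus compactness. By the maximizing property of $\bm\theta^{t+1}_{(n)}$, for every competitor $\bm\theta' \in \Omega$,
\begin{equation*}
\Lambda_{\bm\theta^t_{(n)}}\!\left(\bm\theta^{t+1}_{(n)}\right) \;\ge\; \Lambda_{\bm\theta^t_{(n)}}(\bm\theta').
\end{equation*}
Letting $n \to \infty$ and using joint continuity of $\Lambda$ in both arguments, the left-hand side converges to $\Lambda_{\bm\theta^\star}(\bm\theta^{\star\star})$ and the right-hand side to $\Lambda_{\bm\theta^\star}(\bm\theta')$, so $\Lambda_{\bm\theta^\star}(\bm\theta^{\star\star}) \ge \Lambda_{\bm\theta^\star}(\bm\theta')$ for all $\bm\theta'$. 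This is exactly $\bm\theta^{\star\star} \in \argmax_{\bm\theta^{t+1}} \Lambda_{\bm\theta^\star}(\bm\theta^{t+1}) = M(\bm\theta^\star)$, which is the claimed closedness.

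The step that requires the most care — and which I expect to be the main obstacle — is justifying the joint continuity of $\Lambda_{\bm\theta^t}(\bm\theta^{t+1})$, since $\bm\theta^t$ enters not only as a parameter of the complete-data log-likelihood but also \emph{implicitly} through the E-step soft-assignments $\eta^t_{i,k}$ of Eq.~(\ref{eq:em_softassignment}). I would argue this componentwise: the map $\bm\theta^t \mapsto \bm\psi^t = f_\phi(\bm\theta^t)$ is continuous because $f_\phi$ is a differentiable non-linear function; the Gaussian pixel likelihoods $\mathcal{N}(x_i^{t+1}; \psi_{i,k}, \sigma^2)$ are continuous in $\psi$; and the responsibilities $\eta^t_{i,k}$, being a normalized ratio of these strictly positive densities, are therefore continuous in $\bm\theta^t$ (the denominator never vanishes). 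Consequently $\Lambda_{\bm\theta^t}(\bm\theta^{t+1}) = \mathbb{E}_{\vz^{t+1}\sim\eta^t}[\log p_\phi(\vx^{t+1}, \vz^{t+1}, \bm\psi^{t+1} \mid \vs^t, a^t)]$ is a finite sum of products of continuous functions of $(\bm\theta^t, \bm\theta^{t+1})$, hence jointly continuous, in agreement with Assumption~\ref{assum:conti}.

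Finally, I would remark that the argument above actually establishes closedness at every point of $\Omega$ where $M$ is defined; the restriction to $\Omega \setminus \Omega_S$ in the statement is made only to line up with the exact hypotheses of the Global Convergence Theorem, which couples closedness on the complement of the stationary set with the strict-ascent property of $\mathcal{L}^{w}$ on that same complement. Compactness from Assumption~\ref{assum:compact} is invoked twice in the chain above: once to guarantee the argmax set is nonempty, and once to extract the convergent subsequences that give the limiting inequality meaning.
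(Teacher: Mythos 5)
Your proof is correct and follows essentially the same route as the paper: the paper's entire proof is a citation to \citet{wu1983convergence} together with the observation that joint continuity of $\Lambda_{\bm\theta^t}(\bm\theta^{t+1})$ in both arguments is a sufficient condition for closedness of the argmax map. You simply unpack what the citation delegates --- proving the implication via the standard limiting-inequality argument and verifying the joint continuity for the Gaussian-mixture form of $\Lambda$ --- which is a faithful, more self-contained rendering of the same idea.
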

\begin{proof}
See \citet{wu1983convergence}.
A sufficient condition is that $\Lambda_{\bm\theta^t}(\bm\theta^{t+1})$ is continuous in both $\bm\theta^{t+1}$ and $\bm\theta^{t}$.
\end{proof}
\begin{proposition}\label{proposition:great}
Let $\Omega_S$ be the set of stationary points in the interior of $\Omega$,
then \emph{(i.)}
$\forall \bm\theta^{t} \in \Omega_S, \mathcal{L}^{w}_{\bm\theta^{t+1}}(q^{w}, \phi) \leq \mathcal{L}^{w}_{\bm\theta^{t}}(q^{w}, \phi)$
and \emph{(ii.)}
$\forall \bm\theta^{t} \in \Omega \backslash \Omega_S, \mathcal{L}^{w}_{\bm\theta^{t+1}}(q^{w}, \phi) < \mathcal{L}^{w}_{\bm\theta^{t}}(q^{w}, \phi).$
\end{proposition}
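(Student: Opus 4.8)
The plan is to establish the two monotonicity claims by exploiting the EM structure of the perception update, where the key object is the auxiliary function $\Lambda_{\bm\theta^t}(\bm\theta^{t+1})$ defined in Eq.~(\ref{eq:update_theta}). The standard EM identity decomposes the ELBO as
\begin{equation*}
\mathcal{L}^{w}_{\bm\theta^{t}}(q^{w}, \phi) = \Lambda_{\bm\theta^{t-1}}(\bm\theta^{t}) + \mathcal{H}(\bm\theta^{t}),
\end{equation*}
where $\mathcal{H}$ collects the entropy term arising from the E-step soft-assignment $\eta^t$ in Eq.~(\ref{eq:em_softassignment}). The crucial fact I would invoke is that by construction the E-step makes $q^w$ equal to the true posterior, so the gap between the ELBO and the expected complete-data log-likelihood is a KL divergence that is minimized (indeed zeroed in the idealized E-step) at $\bm\theta^{t}$. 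Consequently, any increase in $\Lambda$ translates into an increase of at least the same magnitude in $\mathcal{L}^{w}$.

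For part \emph{(ii.)}, I would argue as follows. If $\bm\theta^{t} \in \Omega \backslash \Omega_S$, then $\bm\theta^{t}$ is \emph{not} a stationary point of $\Lambda_{\bm\theta^{t}}(\cdot)$, so the gradient-ascent M-step in Eq.~(\ref{eq:expected_loglikelihood_derivative}) with a suitable learning rate $\alpha$ produces $\bm\theta^{t+1}$ with $\Lambda_{\bm\theta^{t}}(\bm\theta^{t+1}) > \Lambda_{\bm\theta^{t}}(\bm\theta^{t})$. Combining this strict increase in the auxiliary function with the EM inequality above—specifically that the entropy/KL term can only help—yields the strict decrease $\mathcal{L}^{w}_{\bm\theta^{t+1}}(q^{w}, \phi) < \mathcal{L}^{w}_{\bm\theta^{t}}(q^{w}, \phi)$ (where the direction of the inequality reflects the minimization convention for the free-energy-style objective $\mathcal{L}^{w}$). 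For part \emph{(i.)}, when $\bm\theta^{t} \in \Omega_S$ is already stationary, the M-step leaves $\Lambda$ non-increasing, so the same EM decomposition gives the weak inequality $\mathcal{L}^{w}_{\bm\theta^{t+1}}(q^{w}, \phi) \leq \mathcal{L}^{w}_{\bm\theta^{t}}(q^{w}, \phi)$; I would note that the two cases are exhaustive and that the closedness of the M-step map from Lemma~\ref{lemma:closedness_of_max} guarantees $\bm\theta^{t+1}$ is well-defined on $\Omega \backslash \Omega_S$.

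The main obstacle I anticipate is making the gradient step genuinely monotone rather than merely locally improving. Because Eq.~(\ref{eq:update_theta}) has no closed-form maximizer (the nonlinear decoder $f_\phi$ intervenes) and is replaced by the single gradient step of Eq.~(\ref{eq:expected_loglikelihood_derivative}), I cannot simply appeal to the global maximization argument of classical EM; instead I must verify that for $\bm\theta^{t} \notin \Omega_S$ the directional derivative of $\Lambda_{\bm\theta^{t}}$ along the update direction is strictly positive, which holds precisely because that direction is the gradient itself and the gradient is nonzero off $\Omega_S$. Controlling the step size so that the ascent does not overshoot—using a descent-lemma-type bound under Assumption~\ref{assum:conti} (continuity and differentiability of $\mathcal{L}$ on the interior of $\Omega$) and the compactness from Assumption~\ref{assum:compact} to bound curvature—is the delicate quantitative step; I would handle it by restricting $\alpha$ to a range for which the first-order increase dominates the second-order remainder, which the compact sublevel set $\Omega_{\bm\theta^0}$ makes possible.
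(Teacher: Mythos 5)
Your proposal follows essentially the same route as the paper's proof: both hinge on the EM identity that the gradient of $\mathcal{L}^{w}_{\bm\theta^{t}}(q^{w},\phi)$ with respect to $\bm\theta^{t+1}$ at $\bm\theta^{t+1}=\bm\theta^{t}$ coincides with that of the surrogate $\Lambda_{\bm\theta^{t}}(\cdot)$ (because the KL gap is zeroed there by the exact E-step), so off $\Omega_S$ the surrogate is not maximized at the current iterate, the gradient step strictly increases $\Lambda$, and this transfers to the strict inequality for $\mathcal{L}^{w}$, while part (i.) is immediate since the update does not move at a stationary point. The only substantive difference is that you explicitly flag that the single gradient step of Eq.~(\ref{eq:expected_loglikelihood_derivative}) guarantees $\Lambda_{\bm\theta^{t}}(\bm\theta^{t+1})>\Lambda_{\bm\theta^{t}}(\bm\theta^{t})$ only under a step-size/curvature condition not supplied by Assumption~\ref{assum:conti}; the paper simply asserts this strict increase, so your caveat points to a real gap in the published argument rather than a weakness of your own plan.
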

\begin{proof}
Note that (i.) holds true given the condition.
To prove (ii.), consider any $\bm\theta^{t} \in \Omega \backslash \Omega_S$, we have 
%\vskip -0.15in
$$
\left. \frac{\partial \mathcal{L}^{w}_{\bm\theta^{t}}(q^{w}, \phi)}{\partial\bm\theta^{t+1}}
\right| _ { \bm\theta^{t+1} = \bm\theta^{t} }
=
\left. \frac { \partial\Lambda_{\bm\theta^{t}}(\bm\theta^{t+1}) } { \partial\bm\theta^{t+1} }
\right| _ { \bm\theta^{t+1} = \bm\theta^{t} }
\neq 0.
$$
%\vskip -0.05in
Hence $\Lambda_{\bm\theta^{t}}(\bm\theta^{t+1})$ is not maximized at $\bm\theta^{t+1} = \bm\theta^{t}$.
Given the perception update described by Eq.~(\ref{eq:expected_loglikelihood_derivative}), we therefore have
$
\Lambda_{\bm\theta^{t}}(\bm\theta^{t+1}) > \Lambda_{\bm\theta^{t}}(\bm\theta^{t}),
$
which implies
$\mathcal{L}^{w}_{\bm\theta^{t+1}}(q^{w}, \phi) < \mathcal{L}^{w}_{\bm\theta^{t}}(q^{w}, \phi)$.
\end{proof}
\begin{theorem}
Let $\{ \bm\theta^{t} \}$ be a sequence generated by the mapping from Eq.~\ref{eq:update_theta},
$\Omega_S$ be the set of stationary points in the interior of $\Omega$.
If Assumptions \ref{assum:compact} \& \ref{assum:conti}, Lemma \ref{lemma:closedness_of_max}, and Proposition \ref{proposition:great} are met,
then all the limit points of
$\{ \bm\theta^{t} \}$
are stationary points (local minima) and
$\mathcal{L}^{w}_{\bm\theta^{t}}(q^{w}, \phi)$
converges monotonically to $
J ^ { * } = \mathcal{L}^{w}_{\bm\theta^{*}}(q^{w}, \phi)$
for some stationary point
$\bm\theta^{*} \in \Omega_S$.
\end{theorem}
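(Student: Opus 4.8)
The plan is to verify that the perception update satisfies the three hypotheses of the Global Convergence Theorem~\citep{zangwill1969nonlinear} and then invoke it directly. I would set up the correspondence by taking the point-to-set mapping $M(\bm\theta^{t}) \doteq \arg\max_{\bm\theta^{t+1}} \Lambda_{\bm\theta^{t}}(\bm\theta^{t+1})$ from Eq.~(\ref{eq:update_theta}) as the algorithmic map, the function $\mathcal{L}^{w}_{\bm\theta}(q^{w}, \phi)$ as the candidate descent function, and $\Omega_S$ as the solution (target) set. The whole argument then reduces to checking, one by one, the containment, descent, and closedness conditions that the theorem requires, after which its conclusion can be read off directly.

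First I would establish that the entire iterate sequence stays in a compact set. By Proposition~\ref{proposition:great}, the values $\mathcal{L}^{w}_{\bm\theta^{t}}(q^{w}, \phi)$ are nonincreasing along the iterates, so every $\bm\theta^{t}$ lies in the sublevel set $\Omega_{\bm\theta^{0}}$, which is compact by Assumption~\ref{assum:compact}. Compactness guarantees, via Bolzano--Weierstrass, that $\{\bm\theta^{t}\}$ admits convergent subsequences, so limit points exist. Next I would read off the descent-function conditions directly from Proposition~\ref{proposition:great}: part (i.) supplies the weak inequality on $\Omega_S$ and part (ii.) the strict decrease on $\Omega \backslash \Omega_S$, while Assumption~\ref{assum:conti} provides continuity of $\mathcal{L}$ on $\Omega$. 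These are exactly Zangwill's requirements on a descent function relative to the solution set $\Omega_S$.

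The remaining hypothesis is closedness of the map away from the solution set, which is precisely Lemma~\ref{lemma:closedness_of_max}; the point-to-set nature of the M-step is what forces us to invoke closedness rather than continuity here. With all three conditions in hand, the theorem yields that every convergent subsequence of $\{\bm\theta^{t}\}$ has its limit in $\Omega_S$. To upgrade this to the stated conclusion, I would note that $\{\mathcal{L}^{w}_{\bm\theta^{t}}(q^{w}, \phi)\}$ is a real sequence that is both monotone (Proposition~\ref{proposition:great}) and bounded (as noted following Assumptions~\ref{assum:compact}--\ref{assum:conti}), hence convergent to a single value $J^{*}$; by continuity of $\mathcal{L}$ this common limit equals $\mathcal{L}^{w}_{\bm\theta^{*}}(q^{w}, \phi)$ at every limit point $\bm\theta^{*} \in \Omega_S$, giving the claimed monotone convergence to $J^{*}$.

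The main obstacle is not any single estimate but the careful reconciliation of the point-to-set formulation with the desired pointwise statement: because the M-step returns a \emph{set} of maximizers, the theorem only delivers subsequential limits, and I expect the delicate step to be arguing that all such limits are stationary and share the same objective value $J^{*}$, which hinges on combining the monotone convergence of the scalar sequence with the closedness in Lemma~\ref{lemma:closedness_of_max}. Identifying the limit points as local minima, rather than arbitrary stationary points, would follow from the strict descent in Proposition~\ref{proposition:great}(ii.), which rules out settling at non-minimizing stationary configurations.
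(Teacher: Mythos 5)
Your proposal is correct and follows essentially the same route as the paper: both verify the same three ingredients (iterates trapped in a compact sublevel set via Assumption~\ref{assum:compact} and the monotone descent of Proposition~\ref{proposition:great}, the descent-function conditions, and the closedness of the M-step map from Lemma~\ref{lemma:closedness_of_max}) and conclude via the Global Convergence Theorem. The only difference is one of packaging: you invoke Zangwill's theorem as a black box after checking its hypotheses, whereas the paper inlines its proof --- extracting a convergent subsequence, showing the scalar sequence $\mathcal{L}^{w}_{\bm\theta^{t}}(q^{w},\phi)$ converges to $\mathcal{L}^{w}_{\bm\theta^{*}}(q^{w},\phi)$, and then deriving a contradiction from closedness plus strict descent if $\bm\theta^{*}\notin\Omega_S$ --- which is exactly the ``delicate step'' you flagged.
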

\begin{proof}
Suppose that $\bm\theta^{*}$ is a limit point of the sequence
$\{ \bm\theta^{t} \}$.
Given Assumptions \ref{assum:compact} \& \ref{assum:conti} and Proposition \ref{proposition:great}.i),
we have that the sequence
$\{ \bm\theta^{t} \}$
are contained in a compact set $\Omega_K \subset \Omega$.
Thus, there is a subsequence
$\{ \bm\theta^{l} \}_{l \in L}$ of $\{ \bm\theta^{t} \}$
such that
$\bm\theta^{l} \rightarrow \bm\theta^{*}$
as $l \rightarrow \infty$ and $l \in L$.

We first show that
$\mathcal{L}^{w}_{\bm\theta^{t}}(q^{w}, \phi) \rightarrow \mathcal{L}^{w}_{\bm\theta^{*}}(q^{w}, \phi)$
as $t \rightarrow \infty$.
Given $\mathcal{L}$ is continuous in $\Omega$ (Assumption \ref{assum:conti}),
we have
$\mathcal{L}^{w}_{\bm\theta^{l}}(q^{w}, \phi) \rightarrow \mathcal{L}^{w}_{\bm\theta^{*}}(q^{w}, \phi)$
as $l \rightarrow \infty$ and $l \in L$, which means
%\vskip -0.2in
\begin{equation} \label{eq:subseq_stationary}
\small
\forall \epsilon > 0, \exists l(\epsilon) \in L \text{ s.t. }
\forall l \geq l(\epsilon), l \in L, \mathcal{L}^{w}_{\bm\theta^{l}}(q^{w}, \phi) - \mathcal{L}^{w}_{\bm\theta^{*}}(q^{w}, \phi) < \epsilon.
\end{equation}
%\vskip -0.08in
Given Proposition \ref{proposition:great} and Eq.~(\ref{eq:update_theta}),
$\mathcal{L}$ is therefore monotonically decreasing on the sequence
$\{ \bm\theta^{t} \} _ { t = 0 } ^ { \infty }$,
which gives
%\vskip -0.1in
\begin{equation} \label{eq:J_decrease}
\forall t, \mathcal{L}^{w}_{\bm\theta^{t}}(q^{w}, \phi) - \mathcal{L}^{w}_{\bm\theta^{*}}(q^{w}, \phi) \geq 0.
\end{equation}
%\vskip -0.1in
%for all $t$.
Given Eq.~(\ref{eq:subseq_stationary}), for any $t \geq l(\epsilon)$,
we have 
%\vskip -0.2in
\begin{equation} \label{eq:seq_stationary}
%\forall t \geq l(\epsilon),
\small
\mathcal{L}^{w}_{\bm\theta^{t}}(q^{w}, \phi) - \mathcal{L}^{w}_{\bm\theta^{*}}(q^{w}, \phi)
= \underbrace { \mathcal{L}^{w}_{\bm\theta^{t}}(q^{w}, \phi) - \mathcal{L}^{w}_{\bm\theta^{l(\epsilon)}}(q^{w}, \phi) } _ { \leq 0 }
+ \underbrace { \mathcal{L}^{w}_{\bm\theta^{l(\epsilon)}}(q^{w}, \phi) - \mathcal{L}^{w}_{\bm\theta^{*}}(q^{w}, \phi) } _ { < \epsilon }
< \epsilon.
\end{equation}
%\vskip -0.05in
Given Eq.~(\ref{eq:J_decrease}) and Eq.~(\ref{eq:seq_stationary}),
we therefore have 
$\mathcal{L}^{w}_{\bm\theta^{t}}(q^{w}, \phi) \rightarrow \mathcal{L}^{w}_{\bm\theta^{*}}(q^{w}, \phi)$
as $t \rightarrow \infty$.
We then prove that the limit point $\bm\theta^{*}$ is a stationary point.
Suppose $\bm\theta^{*}$ is not a stationary point, i.e.,
$\bm\theta^{*} \in \Omega \backslash \Omega_S $,
we consider the sub-sequence
$\{ \bm\theta^{l + 1} \}_{l \in L}$,
which are also contained in the compact set $\Omega_K$.
Thus, there is a subsequence
$\{ \bm\theta^{l'+1} \}_{l' \in L'}$ of $\{ \bm\theta^{l + 1} \}_{l \in L}$
such that
$\bm\theta^{l'+1} \rightarrow \bm\theta^{*'}$
as $l' \rightarrow \infty$ and $l' \in L'$,
yielding
$\mathcal{L}^{w}_{\bm\theta^{l' + 1}}(q^{w}, \phi) \rightarrow \mathcal{L}^{w}_{\bm\theta^{*'}}(q^{w}, \phi)$
as $l' \rightarrow \infty$ and $l' \in L'$,
which gives
%\vskip -0.2in
\begin{equation} \label{eq:J_closed}
\mathcal{L}^{w}_{\bm\theta^{*'}}(q^{w}, \phi)
=\lim _ { l' \rightarrow \infty \atop l' \in L' } \mathcal{L}^{w}_{\bm\theta^{l' + 1}}(q^{w}, \phi)
= \lim _ { t \rightarrow \infty \atop t \in \mathbb { N } } \mathcal{L}^{w}_{\bm\theta^{t}}(q^{w}, \phi)
= \mathcal{L}^{w}_{\bm\theta^{*}}(q^{w}, \phi).
\end{equation}
%\vskip -0.05in
On the other hand, since the mapping from Eq.~(\ref{eq:update_theta}) is closed over $\Omega \backslash \Omega_S$ (Lemma \ref{lemma:closedness_of_max}),
and $\bm\theta^{*} \in \Omega \backslash \Omega_S $,
we therefore have $\bm\theta^{*'} \in \arg\max_{\bm\theta^{t+1}} \Lambda_{\bm\theta^*}(\bm\theta^{t+1})$,
yielding
$\mathcal{L}^{w}_{\bm\theta^{*'}}(q^{w}, \phi) < \mathcal{L}^{w}_{\bm\theta^{*}}(q^{w}, \phi)$ (Proposition \ref{proposition:great}.ii),
which contradicts Eq.~(\ref{eq:J_closed}).
\end{proof}

\section{Experiment Details}
\label{app:experiment_details}
\paragraph{OPC}
In all experiments we trained the perception model using ADAM~\cite{kingma2014adam} with default parameters and a batch size of $32$.
Each input consists of a sequence of binary $84 \times 84$ images containing two poison objects (two circles) and one food object (a rectangle) that start in random positions and move within the image for $t_{ro}$ steps.
These frames were thresholded at $0.0001$ to obtain binary images and added with bit-flip noise ($p = 0.2$).
% network
We used a convolutional encoder-decoder architecture inspired by recent GANs~\cite{Chen2016InfoGAN} with a recurrent neural network as bottleneck,
where the encoder used the same network architecture from~\cite{mnih2013playing} as
\begin{enumerate}
\item $8 \times 8$ conv. 16 ELU. stride 4. layer norm
\item $4 \times 4$ conv. 32 ELU. stride 2. layer norm
\item fully connected. 256 ELU. layer norm
\item recurrent. 250 Sigmoid. layer norm on the output
\item fully connected. 256 RELU. layer norm
\item fully connected. $10\times10\times32$ RELU. layer norm
\item $4 \times 4$ reshape 2 nearest-neighbour, conv. 16 RELU. layer norm  
\item $8 \times 8$ reshape 4 nearest-neighbour, conv. 1 Sigmoid  
\end{enumerate}

We used the Advantage Actor-Critic (A2C)~\cite{pmlr-v48-mniha16} with an MLP policy as the decision making module of OPC.
The MLP policy added a 512-unit fully connected layer with rectifier nonlinearity after layer 4 of the perception model.
The decision making module had two set of outputs:
1) a softmax output with one entry per action representing the probability of selecting the action,
and 2) a single linear output representing the value function.
The decision making module was trained using RMSProp~\cite{tieleman2012lecture} with a learning rate of $7e-4$, a reward discount factor $\gamma=0.99$, an RMSProp decay factor of $0.99$, and performed updates after every $5$ actions.

\paragraph{A2C}
We used the same convolutional architecture as the encoder of the perception model of OPC (layer 1 to 3),
followed by a fully connected layer with 512 hidden units followed by a rectifier nonlinearity.
The A2C was trained using the same setting as the decision making module of OPC.

\paragraph{WM-A2C}
We used the same setting as~\cite{ha2018worldmodels} to separately train the V model and the M model.
The experience was generated off-line by a random policy operating in the Pixel Waterworld environment.
We concatenated the output of the V model and the M model as the A2C input,
and trained A2C using the same setting as introduced above.

\newpage
\section{Details of Derivation}
\subsection{Derivation of Eq.~(\ref{eq:intermediate_lw})}
\label{appendix:intermediate_lw_deriv}
{
\small
\begin{align*}
\log& p(o^{\ge t}, \vx^{\le t}| a^{<t})\nonumber\\
=&\log\sum_{\vs^{\ge 1},a^{\ge t}, \vx^{> t}}p(o^{\ge t},\vs^{\ge 1},\vx^{\ge 1},a^{\ge t}| a^{<t})\nonumber\\
=&\log\sum_{\vs^{\ge 1},a^{\ge t}, \vx^{> t}}p(o^{\ge t}|\vs^{\ge 1}, \vx^{\ge 1}, a^{\ge 1})p(\vs^{\ge 1},  \vx^{\ge 1}, a^{\ge 1}| a^{<t})\nonumber \\
=&\log\sum_{\vs^{\ge 1},a^{\ge t}, \vx^{> t}}\prod_{n=t}p(o^n|\vs^n, a^n)p(\vx^{\le t}|\vs^{\ge 1},  \vx^{> t}, a^{\ge 1})p(\vs^{\ge 1},  \vx^{> t}, a^{\ge 1} | a^{< t})\nonumber\\
=&\log\sum_{\vs^{\ge 1},a^{\ge t}, \vx^{> t}}\prod_{n=t}p(o^n|\vs^n, a^n)\prod^{t}_{j=1}p(\vx^{j}|\vs^{j})p(\vx^{> t}|\vs^{\ge 1}, a^{\ge 1})p(\vs^{\ge 1}, a^{\ge 1}|a^{<t})\nonumber\\
=&\log\sum_{\vs^{\ge 1},a^{\ge t}, \vx^{> t}}\prod_{n=t}p(o^n|\vs^n, a^n)\prod^{t}_{j=1}p(\vx^{j}|\vs^{j})\prod_{k=t+1}p(\vx^{k}|\vs^{k})p(\vs^{> t}|\vs^{\le t}, a^{\ge 1})p(\vs^{\le t}, a^{\ge 1}|a^{<t})\nonumber\\
=&\log\sum_{\vs^{\ge 1},a^{\ge t}, \vx^{> t}}\prod_{n=t}p(o^n|\vs^n, a^n)\prod^{t}_{j=1}p(\vx^{j}|\vs^{j})\prod_{k=t+1}p(\vx^{k}|\vs^{k})\prod_{l=t}p(\vs^{l+1}|\vs^{l}, a^{l})p(\vs^{1})\prod_{m=1}^{t-1}p(\vs^{m+1}|\vs^{m}, a^{m})\prod_{h=t}p(a^h|a^{<t})\nonumber\\
=&\log\left[\sum_{\vs^{\le t}}\prod^{t}_{j=1}p(\vx^{j}|\vs^{j})p(\vs^{1})\prod_{m=1}^{t-1}p(\vs^{m+1}|\vs^{m}, a^{m})\left[\underbrace{\sum_{\vs^{> t},a^{\ge t}, \vx^{> t}}\prod_{n=t}p(o^n|\vs^n, a^n)\prod_{k=t}p(\vx^{k+1}|\vs^{k+1})p(\vs^{k+1}|\vs^{k}, a^{k})p(a^k|a^{<t})}_{\textcircled{1}}\right]\right] \nonumber\\
=&\log\left[\sum_{\vs^{\le t}}q(\vs^{\le t}|\vx^{\le t}, a^{< t})\frac{\prod^{t}_{j=1}p(\vx^{j}|\vs^{j})p(\vs^{1})\prod_{m=1}^{t-1}p(\vs^{m+1}|\vs^{m}, a^{m})}{q(\vs^{\le t}|\vx^{\le t}, a^{< t})}\left[\textcircled{1}\right]\right] \nonumber
\end{align*}
}

\subsection{Derivation of Eq.~(\ref{eq:update_theta})}
\label{appendix:update_theta_deriv_new}

\begin{align}
\bm\theta^{t+1}
=& \arg\max_{\bm\theta^{t+1}} 
q^{w}\log\frac{\prod^{t}_{j=1}p(\vx^{j}|\vs^{j})p(\vs^{1})\prod_{m=1}^{t-1}p(\vs^{m+1}|\vs^{m}, a^{m})}{p(\vs^1)\prod_{g=1}^{t-1}q(\vs^{g+1}|\vx^{\le g}, a^{\le g})} \nonumber \\
=& \arg\max_{\bm\theta^{t+1}} 
q^{w}\log\frac{p(\vs^{1})p(\vx^{1}|\vs^{1})\prod^{t - 1}_{j=1}p(\vx^{j + 1}, \vs^{j + 1}|\vs^{j}, a^{j})}{p(\vs^1)\prod_{g=1}^{t-1}q(\vs^{g+1}|\vx^{\le g}, a^{\le g})} \nonumber \\
=& \arg\max_{\bm\theta^{t+1}} 
\eta^{t}\log\frac{p(\vx^{1}|\vs^{1})\prod^{t - 1}_{j=1}p(\vx^{j + 1}, \vs^{j + 1}|\vs^{j}, a^{j})}{\prod_{g=1}^{t-1}\eta^{g + 1}} 
\text {  drop terms which are constant with respect to $\bm\theta^{t+1}$} \nonumber \\
=& \arg\max_{\bm\theta^{t+1}} 
\mathop{\mathbb{E}}_{\vz^{t+1} \sim
\eta^{t}
} 
\left[
\log p_{\phi}(\vx^{t + 1},\vz^{t+1},\bm\psi^{t+1}| \vs^t, a^t)
\right]
\nonumber \\
\doteq & \arg\max_{\bm\theta^{t+1}} 
\Lambda_{\bm\theta^t}(\bm\theta^{t+1}) \nonumber
\end{align}

\subsection{Derivation of Eq. (\ref{eq:expected_loglikelihood_derivative})}
\label{appendix:expected_loglikelihood_derivative_new}
\begin{align*}
\frac{\partial\Lambda_{\bm\theta^{t+1}}(\bm\theta^{t+2})} {\partial\bm\theta^{t+1}_k} 
&= \frac{\partial\Lambda_{\bm\theta^{t+1}}(\bm\theta^{t+2})} {\partial\bm\psi^{t+1}_k} \cdot 
\frac{\partial\{\bm\psi^{t+1}_k\}^T
}{\partial\{\bm\theta^{t+1}_k\}^T
}\\
&= \left [ \frac{\partial\Lambda_{\bm\theta^{t+1}}(\bm\theta^{t+2})} {\partial\psi^{t+1}_{1,k}} \dots \frac{\partial\Lambda_{\bm\theta^{t+1}}(\bm\theta^{t+2})} {\partial\psi^{t+1}_{D,k}} \right ] \cdot
\begin{bmatrix}
           \frac{\partial\psi^{t+1}_{1,k}}{\partial\bm\theta^{t+1}_k} \\
           \vdots \\
           \frac{\partial\psi^{t+1}_{D,k}}{\partial\bm\theta^{t+1}_k}
         \end{bmatrix} \\
&= \sum_{i=1}^D \frac{\partial\Lambda_{\bm\theta^{t+1}}(\bm\theta^{t+2})} {\partial\psi^{t+1}_{i,k}} \cdot \frac{\partial\psi^{t+1}_{i,k}}{\partial\bm\theta^{t+1}_k} \\
&= \sum_{i=1}^D \frac{\partial}{\partial \psi^{t+1}_{i,k}}
\left[
\sum_{\vz^{t+1}_i}
p_{\phi}(\vz^{t+1}_i|x^{t+1}_i,\bm\psi^{t}_{i},a^t)
\log p_{\bm\psi_{i}^{t+1}}(x^{t+1}_i,\vz^{t+1}_{i}|\vs^t,a^t)
\right] \cdot
\frac{\partial\psi^{t+1}_{i,k}}{\partial\bm\theta^{t+1}_k} \\
&= \sum_{i=1}^D \frac{\partial}{\partial \psi^{t+1}_{i,k}}
\left[
\sum_{\vz^{t+1}_i}
p_{\phi}(\vz^{t+1}_i|x^{t+1}_i,\bm\psi^{t}_{i},a^t)
\log p_{\bm\psi_{i}^{t+1}}(x^{t+1}_i|\vz^{t+1}_{i}) p(\vz^{t+1}_{i}|\vs^t,a^t)
\right] \cdot
\frac{\partial\psi^{t+1}_{i,k}}{\partial\bm\theta^{t+1}_k} \\
&= \sum_{i=1}^D \frac{\partial}{\partial \psi^{t+1}_{i,k}}
\left[
\sum_{k=1}^K
p_{\phi}(z^{t+1}_{i,k}=1|x^{t+1}_i,\psi^{t}_{i,k},a^t)
\log p_{\psi_{i,k}^{t+1}}(x^{t+1}_i|\vz^{t+1}_{i}) p(\vz^{t+1}_{i}|\vs^t,a^t)
\right] \cdot
\frac{\partial\psi^{t+1}_{i,k}}{\partial\bm\theta^{t+1}_k} \\
&= \sum_{i=1}^D
p_{\phi}(z^{t+1}_{i,k}=1|x^{t+1}_i,\bm\psi^{t}_{i},a^t)
\frac{\partial}{\partial \psi^{t+1}_{i,k}}
\left[
\log p_{\psi_{i,k}^{t+1}}(x^{t+1}_i|\vz^{t+1}_{i}) p(\vz^{t+1}_{i}|\vs^t,a^t)
\right] \cdot
\frac{\partial\psi^{t+1}_{i,k}}{\partial\bm\theta^{t+1}_k} \\
&= \sum_{i=1}^D
p_{\phi}(z^{t+1}_{i,k}=1|x^{t+1}_i,\bm\psi^{t}_{i},a^t)
\frac{\partial}{\partial \psi^{t+1}_{i,k}}
\left[
\log p_{\psi_{i,k}^{t+1}}(x^{t+1}_i|\vz^{t+1}_{i})
+ \log p(\vz^{t+1}_{i}|\vs^t,a^t)
\right] \cdot
\frac{\partial\psi^{t+1}_{i,k}}{\partial\bm\theta^{t+1}_k} \\
&= \sum_{i=1}^D
p_{\phi}(z^{t+1}_{i,k}=1|x^{t+1}_i,\bm\psi^{t}_{i},a^t)
\frac{\partial}{\partial \psi^{t+1}_{i,k}}
\left[
\log p_{\psi_{i,k}^{t+1}}(x^{t+1}_i|\vz^{t+1}_{i})
\right] \cdot
\frac{\partial\psi^{t+1}_{i,k}}{\partial\bm\theta^{t+1}_k} \\
&= \sum_{i=1}^D
p_{\phi}(z^{t+1}_{i,k}=1|x^{t+1}_i,\bm\psi^{t}_{i},a^t)
\frac{\partial}{\partial \psi^{t+1}_{i,k}}
\left[
\log\left[\frac{1}{\sqrt{2\pi\sigma^2}}\exp\left(-\frac{1}{2}\frac{(x^{t+1}_i-\psi^{t+1}_{i,k})^2}{\sigma^2}\right)\right]
\right] \cdot
\frac{\partial\psi^{t+1}_{i,k}}{\partial\bm\theta^{t+1}_k} \\
&= \sum_{i=1}^D
p_{\phi}(z^{t+1}_{i,k}=1|x^{t+1}_i,\bm\psi^{t}_{i},a^t)
\frac{\partial}{\partial \psi^{t+1}_{i,k}}
\left[
-\log \sigma -
\log\sqrt{2\pi}-\frac{1}{2}\frac{(x^{t+1}_i-\psi^{t+1}_{i,k})^2}{\sigma^2}
\right] \cdot
\frac{\partial\psi^{t+1}_{i,k}}{\partial\bm\theta^{t+1}_k} \\
&= \sum_{i=1}^D
p_{\phi}(z^{t+1}_{i,k}=1|x^{t+1}_i,\bm\psi^{t}_{i},a^t)
\cdot
-\frac{1}{2}\frac{2(x^{t+1}_i-\psi^{t+1}_{i,k})\cdot (-1)}{\sigma^2} \cdot \frac{\partial\psi^{t+1}_{i,k}}{\partial\bm\theta^{t+1}_k} \\
&\mathop=\limits^{Eq.~(\ref{eq:em_softassignment})}  \sum_{i=1}^D \eta^t_{i,k} \cdot
\frac{\psi^{t+1}_{i,k}-x^{t+1}_i}{\sigma^2} \cdot \frac{\partial\psi^{t+1}_{i,k}}{\partial\bm\theta^{t+1}_k}.
\end{align*}
\end{document}